\DeclareMathOperator*{\argmax}{arg\,max}
\theoremstyle{definition}
\theoremstyle{remark}
\theoremstyle{remark}
\theoremstyle{remark}
\newtheorem{lemma}{Lemma}
\newtheorem{theorem}{Theorem}
\theoremstyle{remark}
\newcommand{\RNum}[1]{\uppercase\expandafter{\romannumeral #1\relax}}
\newif\if@restonecol
\theoremstyle{definition} 
\theoremstyle{remark} 
\begin{document}


\title{{Over-the-Air Split Machine Learning in Wireless MIMO Networks}}

\author{
        Yuzhi~Yang,~
        Zhaoyang~Zhang,~
        Yuqing~Tian,~\\
        Zhaohui~Yang,~
        Chongwen~Huang,~
        Caijun~Zhong,~
        and Kai-Kit~Wong\\
\thanks{This work was supported in part by National Key R\&D Program of China under Grant 2020YFB1807101 and 2018YFB1801104, and National Natural Science Foundation of China under Grant U20A20158, and 61725104.}
\thanks{Y.~Yang, Z.~Zhang (Corresponding Author), Y.~Tian,  Z.~Yang, C.~Huang, and  C.~Zhong are with the College of Information Science and Electronic Engineering, Zhejiang University, Hangzhou 310007, China, and with the International Joint Innovation Center, Zhejiang University, Haining 314400, China, and also with Zhejiang Provincial Key Lab of Information Processing, Communication and Networking (IPCAN), Hangzhou 310007, China. Z. Yang is also with Zhejiang Lab, Hangzhou, 311121, China. (e-mails: \{yuzhi\_yang, ning\_ming, tianyq, yang\_zhaohui, chongwenhuang, caijunzhong\}@zju.edu.cn)}
\thanks{K.-K.~Wong is with the Department of Electronic and Electrical Engineering, University College London, WC1E 6BT London, UK, and also with School of Integrated Technology, Yonsei University,
Seoul, 03722, Korea. (email:  kai-kit.wong@ucl.ac.uk)}
}

\maketitle
\begin{abstract}
In split machine learning (ML), different partitions of a neural network (NN) are executed by different computing nodes, requiring a large amount of communication cost. As over-the-air computation (OAC) can efficiently implement all or part of the computation at the same time of communication, thus by substituting the wireless transmission in the traditional split ML framework with OAC, the communication load can be eased. In this paper, we propose to deploy split ML in a wireless multiple-input multiple-output (MIMO) communication network utilizing the intricate interplay between MIMO-based OAC and NN. The basic procedure of the OAC split ML system is first provided, and we show that the inter-layer connection in a NN of any size can be mathematically decomposed into a set of linear precoding and combining transformations over a MIMO channel carrying out multi-stream analog communication. The precoding and combining matrices which are regarded as trainable parameters, and the MIMO channel matrix, which are regarded as unknown (implicit) parameters, jointly serve as a fully connected layer of the NN. Most interestingly, the channel estimation procedure can be eliminated by exploiting the MIMO channel reciprocity of the forward and backward propagation, thus greatly saving the system costs and/or further improving its overall efficiency. The generalization of the proposed scheme to the conventional NNs is also introduced, i.e., the widely used convolutional neural networks. We demonstrate its effectiveness under both the static and quasi-static memory channel conditions with comprehensive simulations.
\end{abstract}

\begin{IEEEkeywords}
Over-the-air computing (OAC), multiple-input multiple-output (MIMO), split machine learning, neural network
\end{IEEEkeywords}

\section{Introduction}
\subsection{Motivation}
In future sixth-generation (6G) wireless communication systems, human-like intelligence will be brought everywhere \cite{toward6G}. The rapid development of artificial intelligence leads to booming mobile machine learning (ML) applications and requires vast data interaction. The integration of ML and wireless communications leads to an emerging area called split ML, which distributes a neural network (NN) to several edge devices, thus reducing the system computation burden.
In a split ML system, each device first proceeds forward computation on the allocated partial NN and then transmits calculated intermediate results to the next device for further computation.
The backward propagation process is conducted in a way similar to forward transmission but in a backward order.
Through cooperative computation, split ML can be applied to solve deep learning-based joint source-channel coding (JSCC) problem \cite{DeepJSCC} and other ML problems in cloud networks and the Internet of things.
However, split ML requires frequent information exchange among devices, leveraging a heavy communication burden on wireless networks.
Thus, deploying split ML over wireless networks calls for the design of new wireless techniques based on a communication-and-computation integration approach.

Recently, over-the-air computation (OAC) has emerged as a meaningful approach to innovate the traditional wireless digital communication framework \cite{OAC}. Through using the intrinsic linear superposition property of wireless channels, OAC enables wireless communication systems to compute some large-scale, but simple calculation tasks \cite{OACfunc}.
Previous work \cite{FLOAC} shows that if a group of devices simultaneously transmits analog modulated signals, the receiver obtains an aggregated signal, which can be directly applied to the typical federated learning (FL) network. However, traditional OAC work only considers the weighted sum of the edge users' messages within a time-synchronized multiple-access framework and has not been well suited to the split ML system.
In a typical wireless system with multiple-input multiple-output (MIMO) antenna arrays, a group of antennas at the transmitter sends different signals simultaneously, while another group of antennas at the receiver collects the transmitted signal. On the receiver side, each antenna can individually receive the aggregated signal from multiple antennas of the transmitter. Intuitively, the MIMO channel can be viewed as a multiplication-and-addition procedure on the transmitted analog signals, which is widely used in NNs, and thus is of potential to be applied to the split ML in general wireless systems.

There is an intricate interplay between MIMO-based OAC and NN.
A MIMO channel can provide a full connection between the inputs and the outputs and can thus be viewed as a \textit{weighted sum calculator}.
By different channel gains, each antenna on the receiver side conducts calculations with different equivalent weights.
Unlike fully connected layers in NNs, which can be viewed as \textit{controlled} weighted sums, the equivalent weights in MIMO systems are determined by the channel matrices, which are determined by the environment and hence \textit{uncontrollable}.
To control the equivalent weight in such systems, the typical precoding and combining operations in a MIMO system can be properly exploited.
Both procedures are controllable linear transformations on the signals and are inherent in MIMO systems.
As a result, we can control the parameters in the equivalent weighted sum of the overall system by controlling the precoding and combining matrices.

However, implementing the MIMO OAC in split ML still faces two fundamental issues: the forward and backward channels are different and may not be accurately known, and the analog transmission in OAC results in unavoidable noise. To deal with the uncertainty issue of the MIMO channel, we find that  the forward-backward propagation of a NN and the channel reciprocity of a wireless channel are mathematically related (see Section \ref{sec::2}-B for details). This can be exploited to deploy a NN through MIMO-based OAC, which can still lead to correct gradients even without any prior knowledge about the MIMO channel as long as the channel reciprocity and quasi-stability are assumed. Moreover, most NNs can work well after proper training, even when the intermediate results cannot be fully and accurately interpreted. Such unexplainably in NNs inspires us that casting a deterministic linear transformation, i.e., multiplying an implicit matrix determined by the MIMO channel on the intermediate results of a NN through the whole training and test process, may not intensively deteriorate the performance.

The other issue about the noise in wireless communication is not fatal in NNs. From the perspective of information theory, both digital and analog communication can be optimal in wireless communications such as sensor networks \cite{optimal}. However, the digital communication system can reduce transmission errors by employing error-detecting and correcting codes, whereas transmission errors can only be restricted but never eliminated in analog communication. Moreover, it is found that noise is tolerable and sometimes even becomes a training trick in NNs \cite{noisy}, which can also be viewed as implicit dropout \cite{dropout}.
Since the devices transmit unexplainable, intermediate results of the NN in split ML systems, the advantage of progressive error-free is insignificant in such applications. For instance, Jankowski et al. \cite{DeepJSCC} show that analog communication performs better than digital communication in the deep learning-based JSCC problem, which is a particular case of split ML.
\subsection{Related Works}
Split ML is a method where multiple computation nodes cooperatively execute an ML application. In such a system, the ML model is split into multiple parts allocated to different computation nodes. Each node executes a part of the ML model in order and transmits the intermediate results to the next one. When training the NN, the nodes also execute backward propagation in backward order. Most of the existing works \cite{kang17, hu19, hivemind} on split ML focus on how to distribute the model on the nodes in a way to minimize the total communication and computation delay.
There are also some other works bringing up specialized NN structures for split ML \cite{bottlenet, bottlenet++, JALAD}.
Recent work also tries to deploy a proper NN architecture on a given communication network \cite{jmsnas}. Their framework utilizes the neural architecture search method to meet latency and accurate requirements.
However, the above works \cite{kang17, hu19, hivemind, bottlenet, bottlenet++, JALAD, jmsnas} all consider ideal communication among the computing nodes, which ignores the communication scheme design.

On the other hand, in communication systems, OAC is usually deployed in multiple access systems to compute the weighted sum or some easy mathematical operations such as geometric mean, polynomial, and Euclidean norm \cite{OACfunc}. Hence most OAC works are mainly used in FL, where the weighted sum is widely deployed. For example, the authors in \cite{FLOAC} optimize the number of simultaneous accesses, which may improve the efficiency of FL. Besides, Zhu et al. apply broadband analog aggregation to improve OAC in FL with multiple bands \cite{Zhu20}, while Shao et al. consider FL with misaligned OAC \cite{misaligned}.

OAC for multiple access systems still has significant drawbacks.
Firstly, OAC requires strict synchronization among all transmitters, which is hard to realize. Moreover, OAC does not support backward communication, which is rarely considered in previous works as far as we know.
Furthermore, the above works \cite{OACfunc, FLOAC, Zhu20, misaligned} do not consider the MIMO system, which is widely used in practical scenarios.
The multiple antennas of the transmitter in a MIMO system can be viewed as a group of transmitters in the multiple access scheme, which overcomes the drawback of synchronization.
The authors of \cite{MIMOOAC} apply MIMO OAC to multimodal sensing. However, in \cite{MIMOOAC}, the scenario is still a multiple access system where all channels are MIMO channels, and the task of OAC is still the weighted sum, which can be regarded as a direct expansion of previous designs in \cite{OACfunc, FLOAC, Zhu20, misaligned}.
Besides, the implementation of \cite{MIMOOAC} is also strictly limited to FL applications, which is not suitable for split ML.

In other fields, OAC has also provided an alternative to traditional NNs by realizing parts of NNs with acoustic \cite{OAC_acoustic}, optical \cite{OAC_opt}, and radio frequency \cite{OAC_RF} signals.
The authors in \cite{OAC_acoustic, OAC_opt, OAC_RF} use the characteristics of the target systems similar to NNs, and employ the target systems as part of a NN.
OAC systems calculate aggregated results of multiple inputs from different transmitters or time slots by moderating the environments or some parts of the system.
Recently, in wireless communications, Sanchez et al. also realize NNs with the help of multiple paths, and reconfigurable intelligent surfaces \cite{AirNN}. Their system transmits the intermediate output of NNs via time-sequential signals and uses the delay of multiple paths to realize one-dimensional convolution. However, in this paper, we use MIMO channels to realize fully connected layers through multiplexed signals.
\subsection{Contributions}
The main contributions of this paper are summarized as follows:
\begin{itemize}
\item A split ML framework is proposed for wireless MIMO networks by exploiting the MIMO's OAC capability, which not only enables high-throughput and efficient wireless transmission but also reduces the overall computation load by synergistically incorporating the split ML process with the wireless transmission procedure rather than just taking it as a bit pipe.

\item We show that the inter-layer connection in a NN of any size can be mathematically decomposed into a set of linear precoding and combining transformations over the MIMO channels. Therefore, the precoding matrix at the transmitter and the combining matrix at the receiver of each MIMO link, as well as the channel matrix itself, can jointly serve as a fully connected layer of a NN.

\item By exploiting the reciprocity of the MIMO channel in the forward and backward propagation procedures in the proposed framework, we find it unnecessary to conduct explicit channel estimation as otherwise indispensable in conventional communication systems, thus further improving the overall communication and computation efficiency.

\item We also provide some design rules for the proposed system so as to apply it to a fully connected layer of any size in a fully connected NN or a convolutional layer of any size in a convolutional NN. Simulation results show that the proposed scheme is efficient under both static and slowly-varying memory channel conditions.
\end{itemize}

The remainder of the paper is organized as follows. We first introduce the proposed system in Section \ref{sec::2}.
We then mathematically provide some principles and propose a training algorithm in Section \ref{sec::3}.
We extend the proposed system to convolutional NNs and compare different implementations of the system in Section \ref{sec::4}. Numerical results are provided in Section \ref{sec::5}. Section \ref{sec::6} concludes the paper and provides future directions.

\subsection{Notations}
In this paper, we use bold italic lower-case letters for vectors and bold letters for matrices. All the vectors and matrices are assumed to be complex.
The meanings of frequently used notations are summarized in detail in Table \ref{table::0}.
\begin{table}[!htp]\caption{Notations used in this paper}
\label{table::0}
\footnotesize\begin{tabular}{c|c}
\hline
Notation & Meaning \\\hline
$N_t, N_r$ & The number of antennas on the transmitter and the receiver.\\\hline
$N_i, N_o$ & The input and output sizes of a NN layer.\\\hline
$\bm{x}, \bm{y}$ & The signals before precoding, and after combining,\\
& also for the input and output of a NN layer.\\\hline
$\bm{x}_k, \bm{y}_k$ & The transmitted signals after precoding\\
& and the received signal before combining,\\\hline
$\mathbf{H}$ & The channel matrix.\\\hline
$\mathbf{P}_k, \mathbf{C}_k$ & The precoding and combining matrices of transmission $k$.\\\hline
$\bm{n}$ & The noise vector.\\\hline
$r$ & The roughly estimated rank of $\mathbf{H}$.\\\hline
$\mathbf{W}$ & A trainable matrix in a NN.\\\hline
$\mathbf{K}$ & A trainable convolutional kernel in a NN.\\\hline
$\bm{g}_{a}$ & The gradient corresponding to the subscript.\\\hline

\end{tabular}
\end{table}
\section{Basic OAC Unit}\label{sec::2}
In this section, we first briefly introduce the system model and then propose a MIMO OAC-based approach to accelerate communication in split ML.

\subsection{System Settings}
\begin{figure*}
    \centering
       \includegraphics[width=0.7\linewidth]{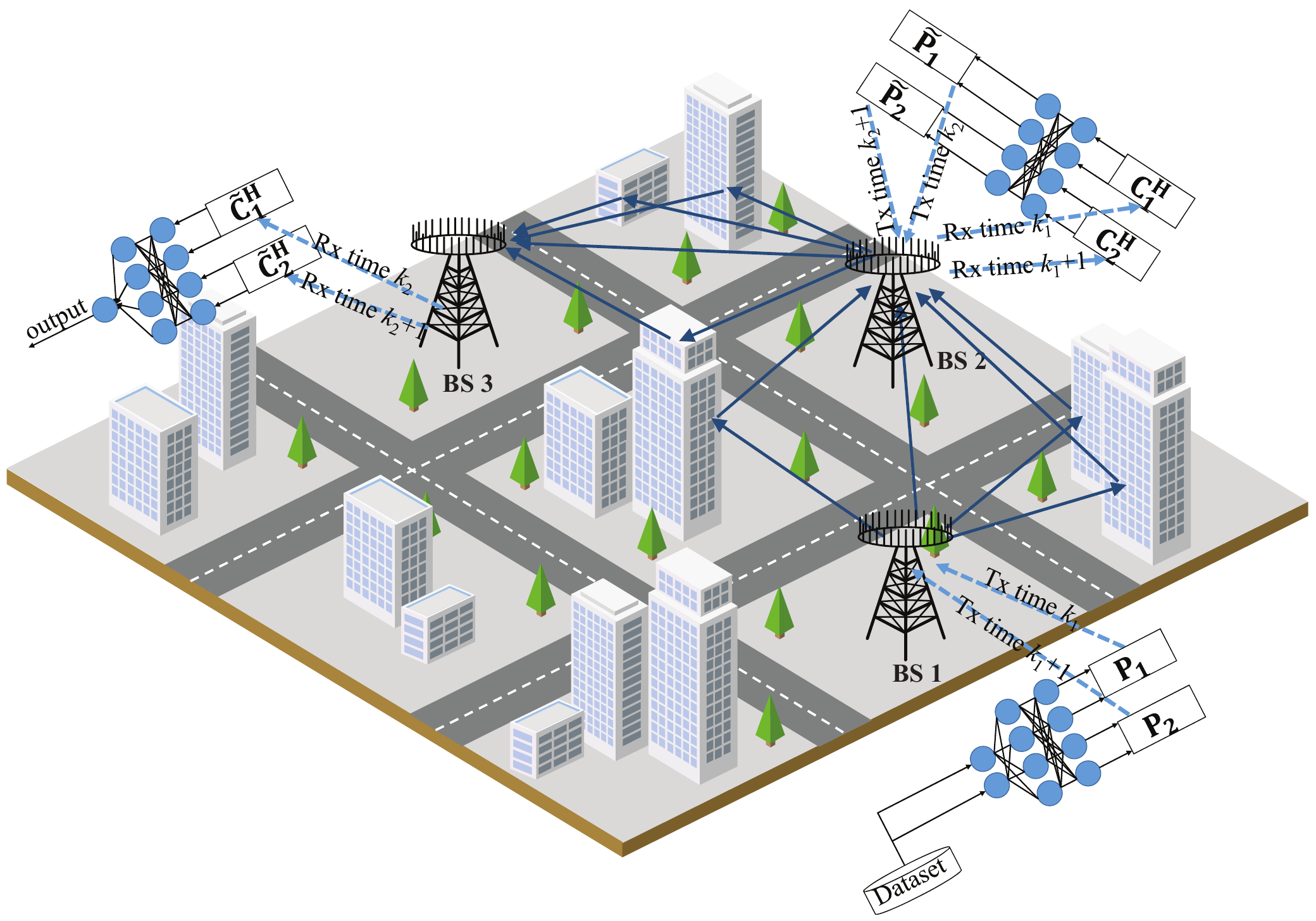}
	  \caption{The considered split ML system based on MIMO-based OAC scheme.}
	  \label{system}
\end{figure*}
Consider a MIMO OAC-based split ML system with multiple base stations, as shown in Fig.~\ref{system}.
Each base station is equipped with multiple antennas, and the MIMO channels among base stations are quasi-stable. A deep NN is split into several segmented NN parts, each of which is deployed on one specific base station. The base stations process the forward computation and backward propagation of the assigned NN fragment.
For simplicity, unless otherwise stated, we consider the split ML system with only one splitting point since it can be easily generalized to multiple-split conditions by conducting a set of single-split systems.
To simplify the description, we use ``transmitter'' and ``receiver'' to refer to the transmitter and receiver with $N_t$ and $N_r$ antennas in the forward transmission, respectively.

Assuming that the channel between the transmitter and the receiver is a quasi-stable MIMO channel with reciprocity, i.e., the forward channel is $\mathbf{H}\in\mathcal{C}^{N_r\times N_t}$, and the backward channel is $\mathbf{H}^T$,
we do not make any further priori hypotheses of the channel as the split ML applications may be deployed in different scenarios.
Under the system setting, we assume that the rank of channel $\mathbf{H}$ is known to be $r$, which determines the maximum amount of dataflows transmitted simultaneously under $\mathbf{H}$.
We note that the rank $r$ is mainly determined by the number of paths, which can be approximately known from the channel model and the number of scatterers.
Therefore, although obtaining the accurate value of $r$ is impossible without channel estimation, we can get $r$ roughly.
Some small singular values below a certain threshold can be regarded as zeros in the aspect of engineering.
Hence, the rank $r$ can be underestimated in some cases.
When designing the system, we assume that $r$ is known exactly, and we later show the cases where there exists a mismatch numerically in Section \ref{sec::5}.
Besides, to better realize OAC, we may apply other techniques, such as applying the orthogonal frequency division multiplexing technique to realize multiple transmissions simultaneously. However, we do not consider such techniques since they do not affect the principles we use in the paper and can be easily combined with the proposed system.
\subsection{OAC for Split ML}
We consider a two-node split ML system, where the original NN is split into two parts, corresponding to the first several layers and the other layers, which are respectively deployed on the transmitter and the receiver. A fully connected layer between the transmitter and the receiver is realized through the OAC technique.
We consider complex NNs \cite{complexResNet}, which have shown effectiveness in graph classification tasks in the proposed system. Since most data in wireless communication is complex, complex NNs are also potential in native wireless communication learning tasks. In complex NNs, the parameters, gradients, as well as intermediate results are complex numbers, and the backpropagation is almost the same as that of traditional real NNs.
The transmitter can be regarded as a federation of several synchronized single-antenna transmitters in MIMO systems. Hence OAC still works similarly with multiple access systems.
We will later discuss how to extend the proposed system to convolutional layers in Section \ref{sec::CNN}.
In particular, a fully connected layer in a NN can be represented by
\begin{equation}
\bm{x}_{\text{next}}=\phi(\mathbf{W}\bm{x}+\bm{b}),
\end{equation}
where $\bm{x}\in\mathcal{C}^{N_i * 1}$ and $\bm{x}_{\text{next}}$ is the input and the output of the considered layer, respectively, $\mathbf{W}\in\mathcal{C}^{N_o * N_i}$ and $\bm{b}\in\mathcal{C}^{N_o * 1}$ are the network parameters, and $\phi(\cdot)$ is the element-wise activation function.

Considering the procedure of OAC, the transmitted and received signals may be amplified. Rather than using amplified parameters, we adopt a batch normalization (BN) layer to remain other parameters in a reasonable range and accelerate training \cite{BN}. A BN layer casts an element-wise linear transformation to the forward propagated data and normalizes them into a predetermined mean and variation distribution.
The BN layer normalizes the intermediate output calculated by OAC to a reasonable range and can counteract the widely existing amplifications in communication systems, making it suitable for the considered system.
With the BN layer, the output becomes
\begin{equation}
\bm{x}_{\text{next}}=\phi(\text{BN}(\bm{y}+\bm{b}))=\phi(\text{BN}(\mathbf{W}\bm{x}+\bm{b})),\label{forward_NN}
\end{equation}
where $\bm{y}=\mathbf{W}\bm{x}$ is calculated through OAC and $\bm{x}_{\text{next}}$ is calculated on the receiver by the received $\bm{y}$. In the remaining parts of this paper, we only consider the OAC part of the system, i.e., how to calculate $\bm{y}$ from $\bm{x}$, since other parts have been well studied in traditional NNs. We discuss the mathematical similarity between the channel reciprocity and the forward-backward propagation below, which is also shown in Fig. \ref{fig::2}.

\begin{figure*}[t]
    \centering
    \includegraphics[width=0.8\linewidth]{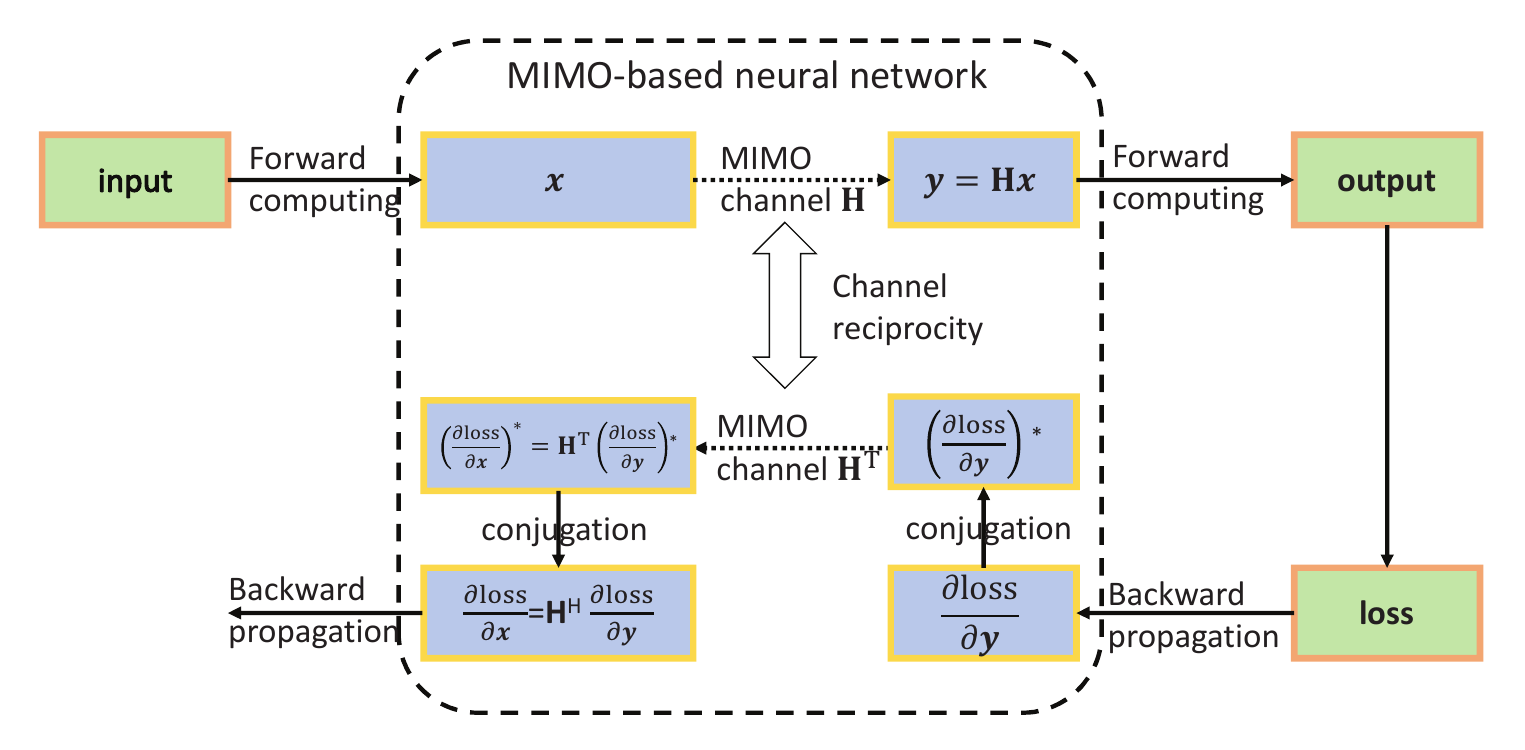}
	\caption{The relationship between channel reciprocity and backward propagation}
	\label{fig::2}
\end{figure*}

\subsubsection{Forward Computation}
For each intermediate result, since its size is usually far larger than the maximum number of simultaneous transfers in a single OAC, we need to conduct $K$ times of OAC for each forward computing, where $K$ is the minimum number of times of progressing the intermediate results and its value is discussed in the following. In the $k$-th OAC, the precoding and combining matrices are denoted by $\mathbf{P}_k \in\mathcal{C}^{r*N_i}$ and $\mathbf{C}_k \in\mathcal{C}^{r*N_o}$, respectively. Hence, the precoding, transmitting, and combining procedures of the $k$-th OAC can be described respectively as
\begin{eqnarray}
\bm{x}_{t,k}&=&\mathbf{P}_k\bm{x},\label{eq3}\\
\bm{y}_{r,k}&=&\mathbf{H}\bm{x}_{t,k}+\bm{n}_k,\label{eq4}\\
\bm{y}_k&=&\mathbf{C}^{H}_k\bm{y}_{r,k},\label{eq5}
\end{eqnarray}
where $\bm{n}_k\sim\mathcal{CN}(0,\sigma^2 \mathbf{I})$ is the Gaussian white noise vector of the forward transmission with the variance $\sigma^2$. The receiver then combines all the $K$ received signals, and the output of OAC becomes
\begin{equation}
\begin{aligned}
\bm{y} &= \sum_{k=1}^K \bm{y}_k= \sum_{k=1}^K \mathbf{C}^{H}_k\mathbf{H}\mathbf{P}_k \bm{x}+\bm{n}_k.\label{forward_air}
\end{aligned}
\end{equation}
\subsubsection{Backward Propagation}
Here, we consider the gradient backward propagation in our system. Let the gradient of $\bm{y}$ be $\bm{g}_y\triangleq\partial\textrm{loss}/\partial\bm{y}$. According to (\ref{eq3})-(\ref{eq5}) and the principle of gradient backward propagation, we can easily obtain the following equations:
\begin{eqnarray}
\bm{g}_{C_k}&=&\bm{y}_{t,k}\bm{g}_y^H,\\
\bm{g}_{x_{t,k}}&=&\mathbf{H}^{H}\mathbf{C}_k\bm{g}_y,\label{back_air}\\
\bm{g}_{P_k}&=&\bm{x}\bm{g}_{x_{t,k}}^H,\\
\bm{g}_x&=&\sum_{k=1}^K \mathbf{P}_k^{H}\bm{g}_{x_{t,k}}.
\end{eqnarray}

All the above equations can be easily calculated by either the receiver or the transmitter except for \eqref{back_air}. Equation \eqref{back_air} can be calculated with noise through backward OAC, where the receiver acts as a transmitter in OAC and transmits $\bm{g}_y^*$ with precoding matrix $\mathbf{C}_k^*$. For the backward propagation, the received signal at the transmitter becomes
\begin{equation}
\tilde{\bm{g}}_{x_{t,k}}=\mathbf{H}^T\mathbf{C}_k^*\bm{g}_y^* + \tilde{\bm{n}}_k=\bm{g}_{x_{t,k}}^*+ \tilde{\bm{n}}_k,\label{back_air_new}
\end{equation}
where $\tilde{\bm{n}}_k\sim\mathcal{CN}(0,\sigma^2 \mathbf{I})$ is the Gaussian white noise vector in the backward transmission. Equation \eqref{back_air_new} reveals that channel reciprocity ensures that the backward OAC is identical to backward propagation in NNs if we transmit the conjugation of the gradient and ignore the noise.
\section{System Implementation and Algorithm Design}\label{sec::3}
In this section, we first provide theoretical findings for split ML based on fully connected layers over wireless networks.
We then formulate the optimization problem and propose the training algorithm for the system.

\subsection{Theoretical Findings in the Proposed System}
Here, we only consider the matrix multiplication in the fully connected layer, i.e., $\bm{y}=\mathbf{W}\bm{x}$.
To ensure that the split ML system works well, we must guarantee that the OAC system can be equivalent to any parameter $\mathbf{W}$ of the fully connected layer. Hence, we have the following feasible condition for noiseless channels.

To realize the function of a fully connected layer, condition \eqref{proposition} should have at least a feasible solution $(\mathbf{P}_k,\mathbf{C}_k), k=1,\cdots,K$ for any channel $\mathbf{H}\in\mathcal{C}^{N_r*N_t}$ with rank $r$ and parameter $\mathbf{W}\in\mathcal{C}^{N_o*N_i}$, where $N_i$ and $N_o$ are the dimensions of the input and the output, respectively.
\begin{equation}\label{proposition}
\mathbf{W} = \sum_{k=1}^K \mathbf{C}^{H}_k\mathbf{H}\mathbf{P}_k.
\end{equation}

From feasible condition \eqref{proposition}, we can obtain the following lemma.
\begin{lemma}\label{the::1}
If the rank of $\mathbf{G}\triangleq (\mathbf{C}^{H}_1\mathbf{H},\cdots,\mathbf{C}^{H}_K\mathbf{H})$ is at least  $N_o$ and $\mathbf{P}_k$ can take any values from $\mathcal{C}^{N_t\times N_i}$, then feasible condition \eqref{proposition} always holds. Symmetrically, if the rank of $\mathbf{G}'\triangleq (\mathbf{P}_1^H\mathbf{H}^{H},\cdots,\mathbf{P}_K^H\mathbf{H}^{H})^H$ is at least $N_i$ and $\mathbf{C}_k$ can take any values from $\mathcal{C}^{N_r\times N_o}$, then feasible condition \eqref{proposition} always holds.
\end{lemma}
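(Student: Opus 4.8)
The plan is to recognize feasible condition \eqref{proposition} as a single \emph{linear matrix equation} in the free variables, and then apply the standard solvability criterion for such equations. For the first claim I would treat the combiners $\mathbf{C}_k$ (hence each block $\mathbf{C}_k^H\mathbf{H}\in\mathcal{C}^{N_o\times N_t}$) as fixed and regard the precoders $\mathbf{P}_k$ as the unknowns. Stacking the precoders vertically into $\mathbf{P}\triangleq(\mathbf{P}_1^T,\dots,\mathbf{P}_K^T)^T\in\mathcal{C}^{KN_t\times N_i}$, the key observation is that the sum in \eqref{proposition} collapses to one product,
\begin{equation}
\sum_{k=1}^K \mathbf{C}_k^H\mathbf{H}\mathbf{P}_k = (\mathbf{C}_1^H\mathbf{H},\dots,\mathbf{C}_K^H\mathbf{H})\,\mathbf{P} = \mathbf{G}\mathbf{P},
\end{equation}
so that \eqref{proposition} becomes $\mathbf{G}\mathbf{P}=\mathbf{W}$ with $\mathbf{G}\in\mathcal{C}^{N_o\times KN_t}$.

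Next I would invoke the fact that $\mathbf{G}\mathbf{P}=\mathbf{W}$ admits a solution $\mathbf{P}$ for \emph{every} right-hand side $\mathbf{W}\in\mathcal{C}^{N_o\times N_i}$ precisely when $\mathbf{G}$ has full row rank, i.e.\ $\operatorname{rank}(\mathbf{G})=N_o$. Solving column by column, each column of $\mathbf{W}$ must lie in the column space of $\mathbf{G}$, which equals all of $\mathcal{C}^{N_o}$ exactly under that rank condition. Since $\mathbf{G}$ has only $N_o$ rows, the hypothesis $\operatorname{rank}(\mathbf{G})\ge N_o$ forces $\operatorname{rank}(\mathbf{G})=N_o$. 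I would make this constructive by exhibiting the right-inverse solution $\mathbf{P}=\mathbf{G}^H(\mathbf{G}\mathbf{G}^H)^{-1}\mathbf{W}$, well defined because $\mathbf{G}\mathbf{G}^H$ is invertible under full row rank, verifying $\mathbf{G}\mathbf{P}=\mathbf{W}$, and then partitioning $\mathbf{P}$ back into the blocks $\mathbf{P}_k$. Because each $\mathbf{P}_k$ is allowed to range over all of $\mathcal{C}^{N_t\times N_i}$, this recovered solution is admissible, which establishes the first claim.

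The symmetric claim follows from the same argument with the roles of precoder and combiner exchanged, i.e.\ fixing the $\mathbf{P}_k$ and solving for the $\mathbf{C}_k$. Here I would set $\mathbf{B}_k\triangleq\mathbf{H}\mathbf{P}_k$, stack these vertically into $\mathbf{B}$, and concatenate the combiners into $\mathbf{C}^H\triangleq(\mathbf{C}_1^H,\dots,\mathbf{C}_K^H)$, so that \eqref{proposition} reads $\mathbf{C}^H\mathbf{B}=\mathbf{W}$. A short check shows $\mathbf{B}=\mathbf{G}'$, so the hypothesis $\operatorname{rank}(\mathbf{G}')\ge N_i$ is exactly full column rank of $\mathbf{B}$, and the explicit left-inverse solution $\mathbf{C}^H=\mathbf{W}(\mathbf{B}^H\mathbf{B})^{-1}\mathbf{B}^H$ satisfies $\mathbf{C}^H\mathbf{B}=\mathbf{W}$ after partitioning into the individual $\mathbf{C}_k$.

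I do not expect a genuine obstacle, since the result is a conditional statement whose hypothesis already supplies the needed rank; the one place to be careful is the dimensional bookkeeping---confirming that ``rank at least $N_o$'' (resp.\ $N_i$) coincides with full row (resp.\ column) rank given the shapes of $\mathbf{G}$ and $\mathbf{G}'$, and that the block partition of the stacked and concatenated solution returns valid individual matrices $\mathbf{P}_k$ (resp.\ $\mathbf{C}_k$) of the stated sizes.
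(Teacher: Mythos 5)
Your proposal is correct and follows essentially the same route as the paper: rewrite the sum as the single block product $\mathbf{G}(\mathbf{P}_1^H,\dots,\mathbf{P}_K^H)^H=\mathbf{W}$, use full row rank of $\mathbf{G}$ to solve for the stacked precoder, partition back into the $\mathbf{P}_k$, and handle the second claim by symmetry. Your explicit right-inverse $\mathbf{G}^H(\mathbf{G}\mathbf{G}^H)^{-1}$ is in fact a cleaner rendering of the paper's informal ``$\mathbf{G}^{-1}\mathbf{W}$'' step, since $\mathbf{G}$ is not square in general.
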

\begin{proof}
We first prove the first part. Since $\mathbf{G}\triangleq (\mathbf{C}^{H}_1\mathbf{H},\cdots,\mathbf{C}^{H}_K\mathbf{H})$, we have $\sum_{k=1}^K \mathbf{C}^{H}_k\mathbf{H}\mathbf{P}_k=\mathbf{G}(\mathbf{P}^{H}_1, \cdots, \mathbf{P}^{H}_K)^H$. Due to the fact that the rank of $\mathbf{G}$ is at least $N_o$, we can easily observe that for arbitrary $\mathbf{W}\in \mathcal{C}^{N_o\times N_i}$, let $\mathbf{P}_i, k=1, \cdots, K$ be the submatrix consisting the $(kr-k+1)$-th to the $(kr)$-th rows of $\mathbf{G}^{-1}\mathbf{W}$. Then, $\mathbf{W}=\sum_{k=1}^K \mathbf{C}^{H}_k\mathbf{H}\mathbf{P}_k$, which completes the first part of Lemma 1. The second part of Lemma \ref{the::1} is symmetric to the first part, hence Lemma \ref{the::1} is proved.\end{proof}
Based on Lemma 1, we further obtain the requirement for the time $K$ of OAC and the rank $r$ of $\mathbf{H}$ that can satisfy\eqref{proposition}.
\begin{theorem}\label{the::2}
Under the condition that $\mathbf{P}_k$ and $\mathbf{C}_k$ can take any values from $\mathcal{C}^{N_t\times N_i}$ and $\mathcal{C}^{N_r\times N_o}$ respectively, feasible condition \eqref{proposition} holds if and only if $Kr\geq \min\{N_i,N_o\}$.
\end{theorem}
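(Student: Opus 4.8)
The plan is to prove the two directions of the equivalence separately: $Kr \geq \min\{N_i,N_o\}$ is necessary by a rank-counting argument, and sufficient by reducing to Lemma~\ref{the::1}. For necessity, I would start from the fact that each summand in \eqref{proposition} is a product involving $\mathbf{H}$, so by submultiplicativity of rank, $\operatorname{rank}(\mathbf{C}^{H}_k\mathbf{H}\mathbf{P}_k)\leq\operatorname{rank}(\mathbf{H})=r$. Subadditivity of rank then gives $\operatorname{rank}(\mathbf{W})=\operatorname{rank}\!\big(\sum_{k=1}^K\mathbf{C}^{H}_k\mathbf{H}\mathbf{P}_k\big)\leq Kr$. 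Since feasibility is required for \emph{every} $\mathbf{W}\in\mathcal{C}^{N_o\times N_i}$, and one may pick a $\mathbf{W}$ of full rank $\min\{N_i,N_o\}$, this forces $\min\{N_i,N_o\}\leq Kr$. This direction is routine.

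For sufficiency, assume $Kr\geq\min\{N_i,N_o\}$ and, without loss of generality, treat the case $N_o\leq N_i$ so that $\min\{N_i,N_o\}=N_o$ and $Kr\geq N_o$; the case $N_i<N_o$ is handled symmetrically through the second half of Lemma~\ref{the::1} using $\mathbf{G}'$ and the precoding matrices. By the first half of Lemma~\ref{the::1}, it suffices to exhibit combining matrices $\mathbf{C}_1,\ldots,\mathbf{C}_K$ for which $\mathbf{G}=(\mathbf{C}^{H}_1\mathbf{H},\cdots,\mathbf{C}^{H}_K\mathbf{H})$ has rank at least $N_o$. The key step is to invoke the SVD $\mathbf{H}=\mathbf{U}\mathbf{\Sigma}\mathbf{V}^H$ and observe that, because $\mathbf{U}$ is invertible and $\mathbf{C}_k$ ranges freely over $\mathcal{C}^{N_r\times N_o}$, each block can be made to realize $\mathbf{C}^{H}_k\mathbf{H}=\mathbf{B}_k\mathbf{V}_r^H$ for an \emph{arbitrary} $\mathbf{B}_k\in\mathcal{C}^{N_o\times r}$, where $\mathbf{V}_r\in\mathcal{C}^{N_t\times r}$ collects the $r$ right singular vectors tied to the nonzero singular values. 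This works precisely because the rank-$r$ block $\mathbf{\Sigma}_r$ is invertible.

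With this reduction, $\mathbf{G}=\mathbf{B}\,\mathbf{D}$, where $\mathbf{B}=(\mathbf{B}_1,\cdots,\mathbf{B}_K)\in\mathcal{C}^{N_o\times Kr}$ is arbitrary and $\mathbf{D}$ is the block-diagonal matrix carrying $K$ copies of $\mathbf{V}_r^H$. Since the columns of $\mathbf{V}_r$ are orthonormal, each diagonal block has full row rank $r$, so $\mathbf{D}$ has full row rank $Kr$ and admits a right inverse; hence $\operatorname{rank}(\mathbf{G})=\operatorname{rank}(\mathbf{B})$, which can be chosen as large as $\min\{N_o,Kr\}=N_o$. Lemma~\ref{the::1} then delivers feasibility of \eqref{proposition}, completing the sufficiency direction and the proof.

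I expect the sufficiency direction to be the crux, specifically the SVD reduction showing that the rank-$r$ bottleneck of $\mathbf{H}$ lets each block $\mathbf{C}^{H}_k\mathbf{H}$ realize an \emph{unconstrained} $N_o\times r$ factor $\mathbf{B}_k$. Once the concatenation is recognized as $\mathbf{B}\mathbf{D}$ with $\mathbf{D}$ of full row rank, the rank bookkeeping is immediate. The subtlety to verify carefully is that the free choice of $\mathbf{C}_k$ together with the invertibility of $\mathbf{U}$ genuinely makes $\mathbf{B}_k$ arbitrary, which is exactly what the invertibility of $\mathbf{\Sigma}_r$ on the rank-$r$ block guarantees.
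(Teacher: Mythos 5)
Your proof is correct and follows essentially the same route as the paper: necessity via $\operatorname{rank}(\mathbf{C}^{H}_k\mathbf{H}\mathbf{P}_k)\leq\operatorname{rank}(\mathbf{H})=r$ and subadditivity of rank, and sufficiency by exhibiting combining (resp.\ precoding) matrices that meet the rank hypothesis of Lemma~\ref{the::1}. Your explicit SVD construction of the $\mathbf{C}_k$ merely fleshes out the existence claim the paper asserts without detail, and is in fact slightly more careful than the paper's statement that $\operatorname{rank}(\mathbf{G})=Kr>N_o$ (impossible, since $\mathbf{G}$ has only $N_o$ rows); the correct conclusion, which you reach, is $\operatorname{rank}(\mathbf{G})=\min\{Kr,N_o\}=N_o$.
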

\begin{proof}
We first provide the necessary condition.
Since $\textrm{rank}(\mathbf{C}^{H}_k\mathbf{H}\mathbf{P}_k)\leq\textrm{rank}(\mathbf{H})=r$, we have
\begin{equation}
\textrm{rank}\left(\sum_{k=1}^K \mathbf{C}^{H}_k\mathbf{H}\mathbf{P}_k\right)\leq Kr.
\end{equation}
Hence if $Kr<\min\{N_i,N_o\}$, for any $\mathbf{W}$ with rank $\min\{N_i,N_o\}$, feasible condition \eqref{proposition} does not hold.

Then, we provide the sufficient condition.
We know $\textrm{rank}(\mathbf{I}_{K}\otimes\mathbf{H})=Kr$, where $\otimes$ denotes the Kronecker product and $\mathbf{I}_K$ denotes the $K\times K$ identity matrix. If $N_i\geq N_o$, we can easily know that there exist some $\mathbf{C}_k, k=1\cdots,K$, such that rank$(\mathbf{G})=Kr>N_o$, where $\mathbf{G}$ is defined in Lemma \ref{the::1}. From the first part of Lemma \ref{the::1}, feasible condition \eqref{proposition} holds. Similarly, if $N_i< N_o$, we can still find some $\mathbf{P}_k, k=1\cdots, K$, which satisfy the condition of the second part of Lemma \ref{the::1}, and hence feasible condition \eqref{proposition} also holds.

As a result, Theorem \ref{the::2} is proved.
\end{proof}
Lemma \ref{the::1} and Theorem \ref{the::2} give the sufficient and necessary conditions to feasible condition \eqref{proposition}.
Theorem 1 shows that we should at least conduct $\min\{N_i, N_o\}/r$ times of transmissions to satisfy feasible condition \eqref{proposition}. The proof to Theorem 1 also guides us that we must follow the conditions in Lemma \ref{the::1} to reach the bound.
We will later show how to design the system with minimum transmission times following the conditions above.

In practical communication systems, there exist noise in both forward and backward communications, and the maximum transmitting powers for both transmissions are limited. However, the influences of the noise on the forward and backward channels are not the same. In the forward channel, the noise directly takes part in the forward computation and results in worse performance on the task. However, the noise only affects the gradient calculation in the backward channel. The well-known stochastic gradient descent (SGD) \cite{SGD} algorithm uses gradients on random parts of the total dataset instead of the entire dataset and still works well since the expectation of the stochastic gradient equals the actual gradient.
Since the expectation of the gradient added with zero-mean noise still equals the real gradient, the noise in the backward channel may not destroy the learning performance.

Besides, we note that all NNs consist of only two components, i.e., linear layers and nonlinear activation functions.
Since the activation functions are always unary, the backward propagation procedures of both types of components are linear. Hence the backward propagation from the output of the NN to any parameter is linear. The only nonlinear part in the NNs training process may lie in the loss function at the output.
Hence if the noise at the backward receiving antenna is Gaussian, as shown in Fig. \ref{fig::system2}, compared with the corresponding centralized NN, the gradient for any parameter should be either noiseless or with Gaussian noise.

We note that the stochastic gradient Langevin dynamics (SGLD) \cite{SGLD1, SGLD2, SGLD3} also introduces noise to SGD. SLGD is proposed to accelerate the training of minibatch SGD, and the core inspiration of it is to introduce a carefully designed Gaussian noise to overcome the randomness brought by the sampling of minibatchs.
However, in the proposed system, the noise is brought by the wireless channel, whose distribution is hard to control.
As a result, the proposed system works similarly to the SLGD, but the origins of the noises are different.
Besides SLGD, we also give another way regarding this problem as follows.

Inspired by SGD, we know that the white noise in the backward channel does not have obvious significance in convergence, which can also be theoretically shown in the following Theorem \ref{the::3}.
Before introducing Theorem \ref{the::3}, we define the following notations.
Denote $h(\bm{x}; \bm{\theta})$ as the model, where $\bm{x}$ represents the input and $\bm{\theta}$ stands for the parameters.
With SGD, the loss of the $t$-th batch can be written by
\begin{equation}
f_{t}(\bm{\theta})\triangleq
\sum_{b=1}^{B} \ell\left(h\left(\mathbf{x}_{b,t}; \bm{\theta}\right), y_{b,t}\right),
\end{equation}
where $B$ represents the batch size, $\ell(\cdot,\cdot)$ denotes the loss function, $\mathbf{x}_{b,t}$ and $y_{b,t}$ stand for the $b$-th input data and the corresponding label of the $t$-th batch, respectively. Now, it is ready to provide the following theorem about convergence with noise in the backward channel.

\begin{theorem}\label{the::3}
If all loss functions $f_{t}(\bm{\theta}), t=1,2,\cdots$ are convex, all the parameters and the gradients are bounded, i.e., $\|f_t(\bm{\theta})-f_t(\bm{\theta}')\|\leq D$ and $\|\bm{g}_t\|\leq G$ for any $\bm{\theta}$ and $\bm{\theta}'$, where $\bm{g}_t=\nabla f_t(\bm{\theta})$, the SGD algorithm converges in $\mathcal{O}(\sigma^2 T^{-1/2})$ even with noise $\bm{n}$. However, the coefficient in the convergence time increases linearly with noise power.
\end{theorem}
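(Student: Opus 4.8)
The plan is to recognize Theorem~\ref{the::3} as a standard online convex optimization statement in disguise: the noisy backward channel simply replaces the exact gradient $\bm{g}_t$ by the stochastic gradient $\tilde{\bm{g}}_t=\bm{g}_t+\bm{n}_t$, and I would run the classical online/stochastic gradient descent regret analysis on the iterates $\bm{\theta}_{t+1}=\bm{\theta}_t-\eta\tilde{\bm{g}}_t$. The key enabling fact, already noted in the paper, is that $\bm{n}_t$ is zero-mean, so that in expectation the noisy update behaves like the clean one; the entire effect of the channel is then confined to a single second-moment term that carries the factor $\sigma^2$.

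First I would fix a comparator $\bm{\theta}^\ast\in\argmin_{\bm{\theta}}\sum_{t}f_t(\bm{\theta})$ and expand the one-step distance to it,
\begin{equation}
\|\bm{\theta}_{t+1}-\bm{\theta}^\ast\|^2=\|\bm{\theta}_t-\bm{\theta}^\ast\|^2-2\eta\,\mathrm{Re}\{\tilde{\bm{g}}_t^H(\bm{\theta}_t-\bm{\theta}^\ast)\}+\eta^2\|\tilde{\bm{g}}_t\|^2.
\end{equation}
Solving for the inner product, splitting $\tilde{\bm{g}}_t=\bm{g}_t+\bm{n}_t$, and using convexity of $f_t$ to obtain $f_t(\bm{\theta}_t)-f_t(\bm{\theta}^\ast)\le\mathrm{Re}\{\bm{g}_t^H(\bm{\theta}_t-\bm{\theta}^\ast)\}$, the distance terms telescope when summed over $t=1,\dots,T$, leaving
\begin{equation}
\sum_{t=1}^{T}\big(f_t(\bm{\theta}_t)-f_t(\bm{\theta}^\ast)\big)\le\frac{\|\bm{\theta}_1-\bm{\theta}^\ast\|^2}{2\eta}+\frac{\eta}{2}\sum_{t=1}^{T}\|\tilde{\bm{g}}_t\|^2-\sum_{t=1}^{T}\mathrm{Re}\{\bm{n}_t^H(\bm{\theta}_t-\bm{\theta}^\ast)\}.
\end{equation}

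Next I would take expectation over the channel noise. Since $\bm{\theta}_t$ is a function only of $\bm{n}_1,\dots,\bm{n}_{t-1}$ while $\bm{n}_t$ is zero-mean and independent of the past, the tower rule makes every cross term $\mathbb{E}[\mathrm{Re}\{\bm{n}_t^H(\bm{\theta}_t-\bm{\theta}^\ast)\}]$ vanish, which is exactly the step that renders backward-channel noise harmless on average. For the quadratic term I would use $\mathbb{E}\|\tilde{\bm{g}}_t\|^2=\|\bm{g}_t\|^2+\mathbb{E}\|\bm{n}_t\|^2\le G^2+c\sigma^2$, in which the noise power $\sigma^2$ enters linearly. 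Bounding $\|\bm{\theta}_1-\bm{\theta}^\ast\|$ via the boundedness of the parameters and taking a fixed step size $\eta=\Theta(T^{-1/2})$, the averaged expected regret satisfies
\begin{equation}
\frac{1}{T}\,\mathbb{E}\!\left[\sum_{t=1}^{T}\big(f_t(\bm{\theta}_t)-f_t(\bm{\theta}^\ast)\big)\right]\le\mathcal{O}\!\big((G^2+c\sigma^2)\,T^{-1/2}\big),
\end{equation}
and a standard online-to-batch (Jensen) argument on the averaged iterate $\bar{\bm{\theta}}=\frac{1}{T}\sum_t\bm{\theta}_t$ converts this into the claimed $\mathcal{O}(\sigma^2 T^{-1/2})$ convergence, with a coefficient that grows linearly in $\sigma^2$.

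The main obstacle I anticipate is not the algebra but the probabilistic bookkeeping behind the vanishing cross term: I must define the filtration so that $\bm{\theta}_t$ is measurable with respect to the noise realized strictly before step $t$ and is genuinely independent of $\bm{n}_t$, since this independence is what kills the conditional expectation. A second, milder point is matching the exact rate as stated: the natural bound carries the coefficient $G^2+c\sigma^2$, so I would deliberately keep $\eta$ fixed at order $T^{-1/2}$ rather than optimizing it over $\sigma$, as it is this choice that preserves the clean linear-in-$\sigma^2$ dependence that the theorem advertises.
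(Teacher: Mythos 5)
Your proposal follows essentially the same route as the paper's Appendix~A proof: bound the regret via convexity, expand the one-step distance $\|\bm{\theta}^{(t+1)}-\bm{\theta}^*\|^2$ under the noisy update, telescope, and isolate the noise into a zero-mean cross term plus a quadratic term carrying $\sigma^2$ linearly. The only cosmetic differences are that the paper uses a decreasing step size $\eta_t=C/\sqrt{t}$ and leaves the cross term as a zero-mean random quantity with standard deviation $\mathcal{O}(\sigma T^{-1/2})$ rather than taking full expectations, neither of which changes the argument or the rate.
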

\begin{proof}
Please refer to Appendix \ref{proof_the::3}.
\end{proof}

Theorem \ref{the::3} indicates that the introduction of Gaussian white noise in the backward propagation will not affect the order in the convergence rate of SGD.
Besides, the noise power only linearly changes the coefficient of the convergence rate.

Another problem comes from the amplification. In wireless communication systems, due to the effect of noise and the limited budget of the transmitting power, we must apply amplification for both forward and backward transmissions to match the transmit power, which may influence the performance of NNs. Hence, in the following, we discuss how to realize arbitrary amplification in the forward computation and backward propagation without scarifying the performance of NNs significantly.

In the forward computation, we can apply a simple power normalization before transmitting, i.e., transmit $\mathbf{P}_k\bm{x}/A$ instead of $\mathbf{P}_k\bm{x}$, where $A=\sqrt{\mathbb{E}(\bm{x}^H\mathbf{P}_k^{H}\mathbf{P}_k\bm{x})}$, as shown in Fig. \ref{fig::system2}. Since the normalization layer amplifies all the transmitted signals and there is a BN layer at the receiver to offset the effect of power amplification, it does not significantly affect the NN's result.

In backpropagation, normalizing the power of the gradient of the backward propagation only amplifies the gradient of the subsequent calculation on the same scale. To overcome this problem, we can use optimizers insensitive to the magnitude of the gradient, such as the Adam optimizer \cite{adam}.

\subsection{The Implementations for Fully Connected NNs}

\begin{figure*} [!htp]
    \centering
       \includegraphics[width=1\linewidth]{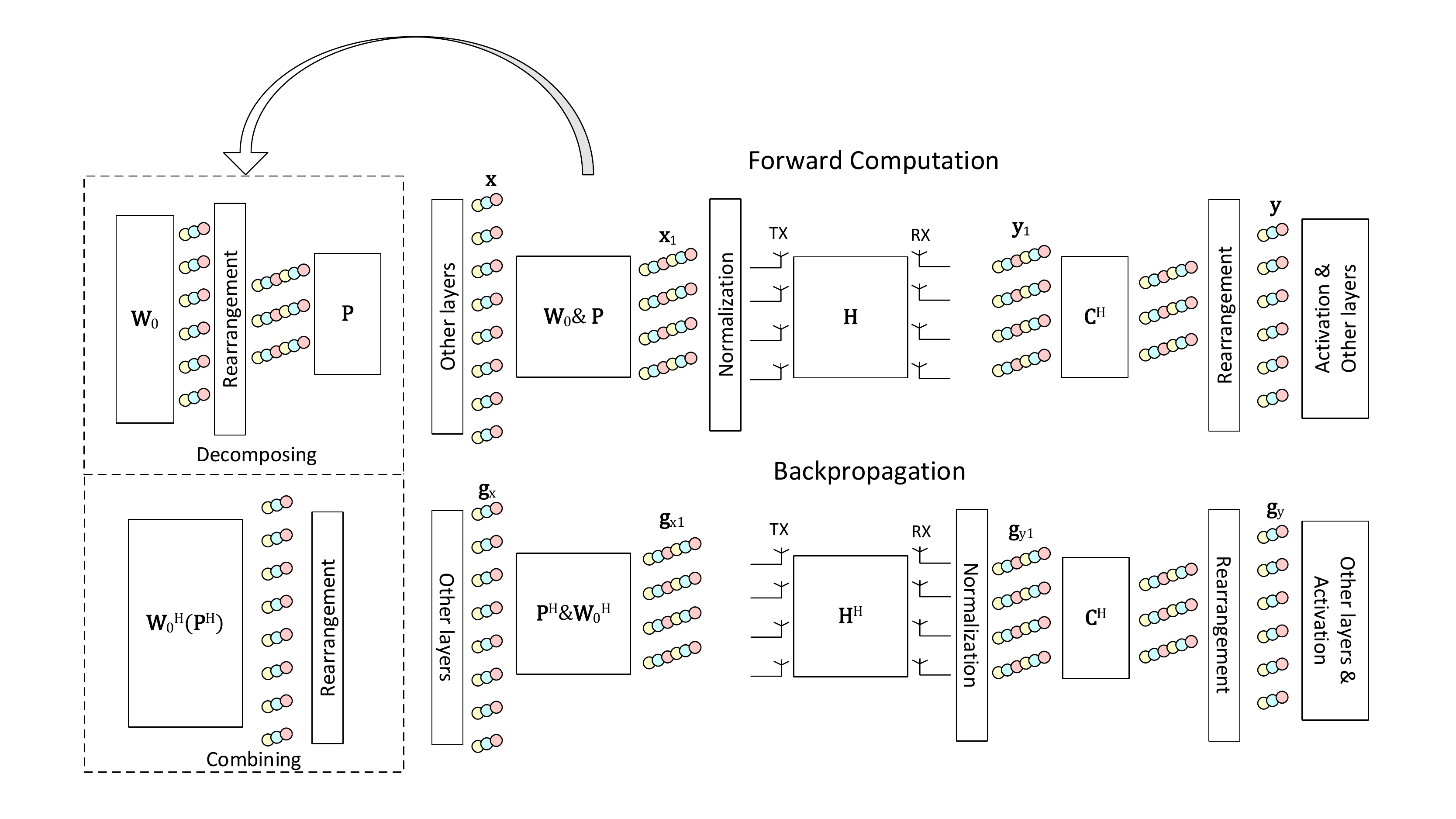}
	  \caption{Two transmitter parameterized implements of the proposed MIMO OAC-based split ML system. In this illustration, batch size $B=3$, the rank of the channel is $r=3$, the input and output sizes are $N_i =N_o =6$, and the numbers of antennas at the transmitter and the receiver are $N_t = N_r = 4$.}
	  \label{fig::system2}
\end{figure*}

Since the channel does not change over time, the optimum beamforming matrix remains the same among the transmissions.
Hence we can always construct a receiving combining matrix $\mathbf{C}\in\mathcal{C}^{N_r\times r}$ such that matrix $\mathbf{H}^{H}\mathbf{C}$ is of rank $r$.
As a result, the receiver can simultaneously decode $r$ independent data streams, i.e.,
\begin{equation}
\mathbf{C}_k=(\mathbf{0}_{N_r\times (k-1)r}, \mathbf{C}, \mathbf{0}_{N_r\times (K-k)r}), k=1,\cdots,K.\label{eq::C}
\end{equation}

According to Lemma \ref{the::1} and Theorem \ref{the::2}, the $\mathbf{C}_k$ above satisfies feasible condition \eqref{proposition} with the least number of transmissions $K$ when $N_i\geq N_o$. The procedure above is symmetric between $\mathbf{P}_k$ and $\mathbf{C}_k$ that we can also use the same precoding matrix $\mathbf{P}\in\mathcal{C}^{N_t\times r}$ such that matrix $\mathbf{H}\mathbf{P}$ is of rank $r$, and $\mathbf{P}_k=(\mathbf{0}_{N_t\times (k-1)r}, \mathbf{P}, \mathbf{0}_{N_t\times (K-k)r})$ for $k=1,\cdots,K$ to satisfy feasible condition \eqref{proposition} with the least $K$ when $N_i\leq N_o$. We refer to the former ones as receiver parameterized designs and the latter ones as transmitter parameterized designs, respectively. Based on symmetry, we can only discuss the transmitter parameterized designs in this section.

According to Lemma \ref{the::1}, we obtain a straightforward implementation by directly setting all the $\mathbf{P}_k$s to be trainable parameters as shown in Fig.~\ref{fig::system2}. However, the straightforward implementation does not take advantage of the fact that the structure of the precoding matrices should also be determined by the channel $\mathbf{H}$ to a certain degree. It is more desirable to separate the precoding matrices by two parts, i.e.,
\begin{equation}
\mathbf{P}_k=\mathbf{W}_0(\mathbf{0}_{N_r\times (k-1)r}, \mathbf{P}, \mathbf{0}_{N_r\times (K-k)r}),  k=1,\cdots,K,\label{eq::P}
\end{equation}
where $\mathbf{P}$ plays a similar role in the precoding matrices compared to $\mathbf{C}$ in the receiving combining matrices, and $\mathbf{W}_0$ is corresponding to the remaining NN parameters. The system is also illustrated in Fig. \ref{fig::system2}.
In this paper, as shown in Fig. \ref{fig::system2}, we refer to the former implementation as the combined design and the latter as the separated design, respectively. The detailed comparisons among the implementations are provided in Section \ref{sec::comp}.

\subsection{Problem Formulation}
We first analyze the transmitter parameterized and separated design for full-connected layers.
From equations \eqref{forward_air}, \eqref{eq::C}, and \eqref{eq::P}, the sum of the received signals of the forward transmissions $\bm{y}$ can be represented by
\begin{eqnarray}
&\bm{y}&=\left(\begin{array}{c}
\mathbf{C}^{H}\mathbf{H}\mathbf{P}\bar{\bm{x}}_1
+\mathbf{C}^{H}\bm{n}_1 \\
\vdots \\
\mathbf{C}^{H}\mathbf{H}\mathbf{P}\bar{\bm{x}}_K+\mathbf{C}^{H}\bm{n}_K
\end{array}\right),\\
&\left(\begin{array}{c}
\bm{x}_1\\
\vdots \\
\bm{x}_K
\end{array}\right)&=\mathbf{W}_{0}
\bm{x}/A,
\end{eqnarray}
where $A$ denotes the power normalization parameter, i.e., $A=\|\mathbf{W}_{0,((k-1)r:kr-1, :)}\bm{x}\|_2$. Hence the signal-noise ratio (SNR) of the $(k-1)r+t$-th element in $\bm{y}$ can be represented as
\begin{equation}
\textrm{SNR}_{f,(k-1)r+t}=\mathbb{E}_{\bm{x}}(\|\mathbf{C}^{H}_{t, :}\mathbf{H}\mathbf{P}\bm{x}_k\|_2)N_r/P_n,
\end{equation}
where $\mathbf{C}^{H}_{t, :}$ denotes the $t$-th row of $\mathbf{C}^{H}$ and $P_n$ is the total power of noise.

Symmetrically, the SNR of the backward transmission can be written by
\begin{equation}
\textrm{SNR}_{b,(k-1)r+t}=\mathbb{E}_{\bm{x}}(\|\mathbf{P}^{T}_{(t, :)}\mathbf{H}^{T}\mathbf{C}^*\bm{g}_{y,k}\|_2)N_t/(\tilde{A}P_n),
\end{equation}
where $\tilde{A}$ is the power normalization parameter of backward communication.

Then, we briefly show the relationship among different implementations of our system.
The transmitter parameterized designs are symmetric to the corresponding receiver parameterized designs, and the only difference is that the precoding matrices or the combining matrices are more complicated for combined designs. From the definition of SNR and the structure, $\textrm{SNR}_f$ and $\textrm{SNR}_b$ can be obtained similarly following the procedure above.

Naturally, the goal of MIMO communication is to maximize the SNR of the worst-used equivalent subchannel, i.e.,
\begin{equation}
\argmax_{\mathbf{P},\mathbf{C}} \min\{\min_{k,t}\textrm{SNR}_{f,(k-1)r+t}, \min_{k,t}\textrm{SNR}_{b,(k-1)r+t}\}.\label{target1}
\end{equation}
In the optimal solution to \eqref{target1}, each row of $\mathbf{P}$ and $\mathbf{C}$ is the singular vector corresponding to the largest singular value of channel matrix $\mathbf{H}$, i.e., both transmitter and receiver allocate full power on the optimal beamforming vector.
Considering that our system is a joint design of communication and split ML, we should give a thought to the constraints of split ML, i.e., the conditions of Lemma \ref{the::1} and Theorem \ref{the::2} hold.
However, the communication goal and the computation goal (the goal of NN) is coupled in the proposed system. We do not need to distinctly express the constraints of split ML in the overall loss of the NN since the computation goal of the system can implicitly ensure them.
Based on this fact, we show how to translate problem \eqref{target1} to the loss of NN in the following.
\subsection{Training Method of the Proposed System}
\begin{algorithm}\small
\caption{\textbf{Training Split ML via OAC}}
\label{alg::1}
\For {$n=1$ to $N$}{
\If {$n=1$}{
Sample a batch of input data $\bm{x}$\Comment{Begin forward computation}\\
$\bm{y}_0\leftarrow\bm{x}$\\
}
\Else{
Receive signal $\bm{y}$ by antennas from multiple transmissions\\
Calculate the covariance matrix: $\mathbf{R}_{f,n-1}\leftarrow\mathbb{E}(\bm{y}\bm{y}^H)$\\
Update the time-averaged covariance matrix: $\bar{\mathbf{R}}_{f,n-1}\leftarrow \alpha\bar{\mathbf{R}}_{f,n-1}+(1-\alpha)\mathbf{R}_{f,n-1}$\\
Obtain $\bm{y}_{n-1}$ with the combining matrices as shown in Fig. \ref{fig::system2}\\
}
Conduct local forward computation with $\bm{y}_{n-1}$, and outputs $\bm{x}_n$ for transmitting\\
\If {$n=N$}{Output $\bm{x}_n$ as the final prediction\\}
\Else{Transmit $\rm{s}_n$ with the precoding matrices as shown in Fig. \ref{fig::system2}\\
}}
Calculate the loss $\ell$ of the main task.\\
\For {$n=N$ to $1$}{
\If{$n\neq N$}{
Receive signal $\bm{g}_s$ by antennas from multiple transmissions\Comment{Begin backward propagation}\\
Calculate the covariance matrix: $\mathbf{R}_{b,n}\leftarrow\mathbb{E}(\bm{g}_x\bm{g}_x^H)$\\
Update the time averaged covariance matrix: $\bar{\mathbf{R}}_{b,n}\leftarrow \alpha\bar{\mathbf{R}}_{b,n}+(1-\alpha)\mathbf{R}_{b,n}$\\
Cast SVD decomposition on $\bar{\mathbf{R}}_{b,n}$: $\bar{\mathbf{R}}_{b,n} = \mathbf{U}\bm{\Sigma}\mathbf{V}^H$\\
Add the goal of communication: $\bm{g}_x\leftarrow\bm{g}_x+(1/2B^2)\partial\|\bm{x}_n^H\mathbf{V}(:, N_t-r+1:N_t)\|_2^2/\partial{\bm{x}}$\\
Obtain $\bm{g}_{x,n}$ with the precoding matrices as shown in Fig. \ref{fig::system2}\\
Employ backward propagation from $\bm{g}_{x,n}$, and obtain $\bm{g}_{y,n-1}$\\
}
\Else{Employ backward propagation from $\ell$, and obtain $\bm{g}_{y,n-1}$\\}
\If{$n\neq 1$}{
SVD decomposition on $\bar{\mathbf{R}}_{f,n-1}$: $\bar{\mathbf{R}}_{f,n-1} = \mathbf{U}\bm{\Sigma}\mathbf{V}^H$\\
Transmit $\rm{g}_y$ with the combining matrices as shown in Fig. \ref{fig::system2}\\
Add the goal of communication: $\bm{g}_C\leftarrow\bm{g}_C+\partial\|\mathbf{C}\mathbf{U}(:, N_r-r+1:N_r)\|_2^2/\partial{\mathbf{C}}$
}
}
Each node updates local parameters based on the calculated gradients.
\end{algorithm}
In this section, we introduce a method to translate \eqref{target1} to the loss of NN.
As explained above, simply considering the goal in \eqref{target1} leads to an obviously unreasonable solution. Moreover, the goal in \eqref{target1} is not derivable, making it impossible to be directly applied in NNs. However, it inspires us that in the optimal system, we should only use some of the best subchannels.
Based on this inspiration, we can find conditions for suboptimal solutions of \eqref{target1} that
\begin{eqnarray}
\mathbf{H} = \mathbf{U}\bm{\Sigma}\mathbf{V}^H,\label{SVD}\\
\mathbf{P}\mathbf{V}_{:, -r:} = \mathbf{0},\label{loss1}\\
\mathbf{C}\mathbf{U}_{:, -r:} = \mathbf{0},\label{loss2}
\end{eqnarray}
where \eqref{SVD} is the SVD decomposition of $\mathbf{H}$, and $\mathbf{V}_{:, -r:}$ denotes the matrix consisting of the last $r$ columns of $\mathbf{V}$, i.e., the singular vectors corresponding to the smallest $r$ singular values.

The conditions mainly come from Lemma \ref{the::1} that $r$ is the minimum value that ensures the conditions of Lemma \ref{the::1} and the conditions above could be satisfied simultaneously. Under the conditions above, the columns of the matrices $\mathbf{P}_k$ and $\mathbf{C}_k$ are constrained to linear spaces of dimension $r$, respectively. According to the basic linear algebra theory, there exist $\mathbf{P}_k$s and $\mathbf{C}_k$s satisfying both constraints. We note that in the final trained NNs, the ranks $\mathbf{P}_k$ and $\mathbf{C}_k$ may not always be $r$. However, it is caused by the sparsity of NN parameters \cite{sparse}, which does not mean a mistake in our design.

In real systems, we cannot directly obtain $\mathbf{H}$ directly. However, we can still realize the constraints in a similar way with self-adaptive beamforming. In simple self-adaptive beamforming algorithms, we usually use the covariance matrix of the received signal to estimate the singular vectors of the channel $\mathbf{H}$ and hence obtain the optimal beamforming vector. Similarly, we can use the singular vectors of the covariance matrix of the received signal to estimate the singular vectors and then compute the loss by \eqref{loss1} and \eqref{loss2}. Combining \eqref{loss1}, \eqref{loss2}, and the training algorithm of centralized NNs, we finally obtain the detailed training algorithm for each batch of input data as shown in Alg. \ref{alg::1}.

\section{Generalization and Comparison}\label{sec::4}
In this section, we generalize the proposed MIMO OAC-based split ML system to convolutional NNs. We also analyze and compare the different implementations of the proposed system.
\subsection{Extension to Convolutional NNs}\label{sec::CNN}
For the considered convolutional layer, we denote the size of the input and the output images as $N_{wi}\times N_{hi}$ and $N_{wo}\times N_{ho}$ with $N_{ci}$ and $N_{co}$ channels, respectively. The size of the convolutional kernel is $N_k\times N_k$.

\begin{figure*} [!htp]
    \centering
       \includegraphics[width=0.9\linewidth]{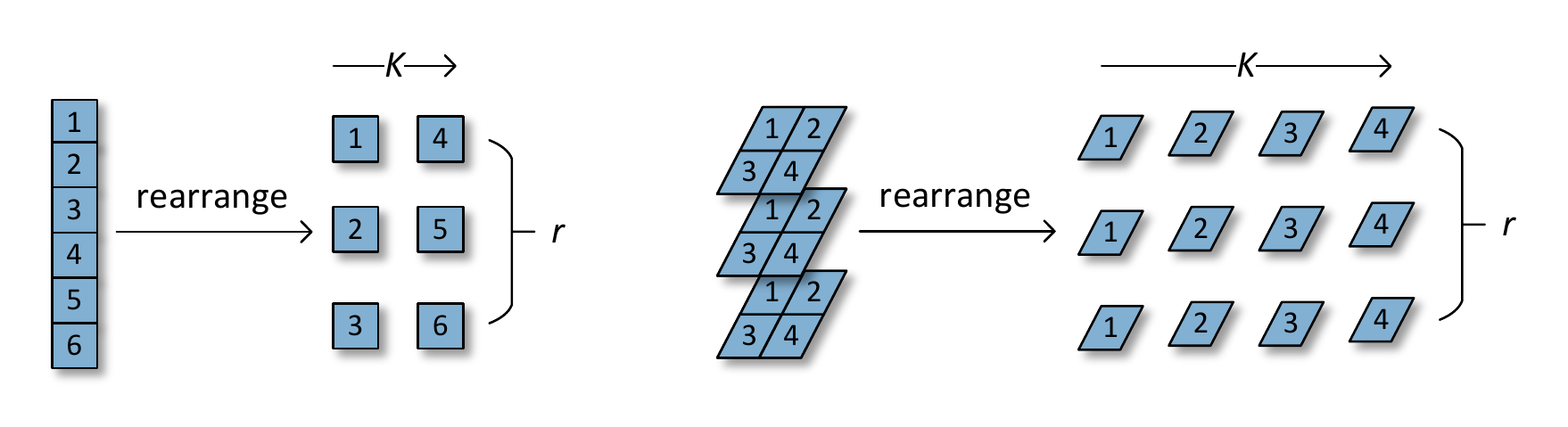}
	  \caption{Illustration of the rearrangement in Fig. \ref{fig::system2} for both full-connected (left) and convolutional (right) layers.}
	  \label{fig::system3}
\end{figure*}
Here, we show how to extend the implementations to convolutional layers. The calculation in a convolutional layer can be described as:
\begin{equation}
\left(\begin{array}{c}
\mathbf{Y}_{1} \\
\vdots \\
\mathbf{Y}_{N_{ci}}
\end{array}\right)
=\left(\begin{array}{ccc}
\mathbf{K}_{1} & \cdots & \mathbf{K}_{1} \\
\vdots & \ddots & \vdots \\
\mathbf{K}_{N_{co}} & \cdots & \mathbf{K}_{N_{co}}
\end{array}\right) *\left(\begin{array}{c}
\mathbf{X}_{1} \\
\vdots \\
\mathbf{X}_{N_{ci}}
\end{array}\right),
\end{equation}
where $\mathbf{X}_{n}$, $\mathbf{Y}_{n}$, $\mathbf{K}_{n}$ represent the $n$-th input channel, the $n$-th output channel, and the $n$-th convolutional kernel, respectively. The symbol $*$ represents convolution operation, and the calculation rules between matrices are the same as matrix multiplication, except that convolution replaces multiplication.

Since the convolution operation can be seen as a particular multiplication with specific rules, it is still a linear operation. Hence it still satisfies the associative laws the. Introducing an arbitrary matrix $\mathbf{W}\in\mathcal{C}^{N_{ci}*N_{co}}$, we have
\begin{equation}
\small
\begin{aligned}
&\left((\mathbf{W}\otimes\mathbf{I}_{N_k \times N_k})\left(\begin{array}{ccc}
\mathbf{K}_{1} & \cdots & \mathbf{K}_{1} \\
\vdots & \ddots & \vdots \\
\mathbf{K}_{N_{co}} & \cdots & \mathbf{K}_{N_{co}}
\end{array}\right)\right) *\left(\begin{array}{c}
\mathbf{X}_{1} \\
\vdots \\
\mathbf{X}_{N_{ci}}
\end{array}\right)\\
=&
(\mathbf{W}\otimes\mathbf{I}_{N_{ho} \times N_{ho}})\left(\left(\begin{array}{ccc}
\mathbf{K}_{1} & \cdots & \mathbf{K}_{1} \\
\vdots & \ddots & \vdots \\
\mathbf{K}_{N_{co}} & \cdots & \mathbf{K}_{N_{co}}
\end{array}\right) *\left(\begin{array}{c}
\mathbf{X}_{1} \\
\vdots \\
\mathbf{X}_{N_{ci}}
\end{array}\right)\right).
\end{aligned}
\label{eqcnn}
\end{equation}

Equation \eqref{eqcnn} shows that multiplying a matrix $(\mathbf{W}\otimes\mathbf{I}_{N_k \times N_k})$ after convolution is equivalent to multiplying the matrix on the kernels or applying another convolutional layer with $1\times 1$ kernels, i.e.,
\begin{equation}
\footnotesize
\begin{aligned}
&\left(\begin{array}{ccc}
\mathbf{K}'_{1} & \cdots & \mathbf{K}'_{1} \\
\vdots & \ddots & \vdots \\
\mathbf{K}'_{N_{co}} & \cdots & \mathbf{K}'_{N_{co}}
\end{array}\right)\\
&=
(\mathbf{W}\otimes\mathbf{I}_{N_k \times N_k})\left(\begin{array}{ccc}
\mathbf{K}_{1} & \cdots & \mathbf{K}_{1} \\
\vdots & \ddots & \vdots \\
\mathbf{K}_{N_{co}} & \cdots & \mathbf{K}_{N_{co}}
\end{array}\right).
\end{aligned}
\end{equation}

Hence, for convolutional layers, we only need to replace the rearranging method in Fig. \ref{fig::system2}. The difference between the rearranging methods of full-connected and convolutional layers is shown in Fig. \ref{fig::system3}. We note that, in Fig. \ref{fig::system3}, only the intermediate output corresponding to a piece of input data is shown, whereas a batch of outputs is transmitted in each iteration in practice.
\subsection{Comparisons Among the Implementations}\label{sec::comp}
In this section, we compare all the proposed implementations in terms of the amounts of parameters, calculation, and communication,
which are described by the number of complex parameters, the number of complex multiply-accumulate operations (MACs), and the number of forward transmissions per batch, respectively.
The MAC operation is defined as calculating $a\leftarrow a+b\times c$ by a multiplier–accumulator unit, which is widely adopted in the hardware field. We show the results of the comparison in Table \ref{table::1}.
\begin{table*}[]
\centering
\caption{The amounts of parameters, computation, and communication of all designs per batch of data. (``FC'' is for fully connected layers, and ``Conv'' is for convolutional layers. $B$ represents the batch size.)}\label{table::1}
\begin{tabular}{|c|c|c|c|c|}
\cline{1-5}
 & Implementation & Parameters & MACs & Transmissions \\ \cline{1-5}
\multirow{4}{*}{FC} & \begin{tabular}[c]{@{}c@{}}Transmitter parameterized,\\ combined design\end{tabular} & $N_iN_oN_t/r+N_rr$ & $BN_o(N_r+N_iN_t/r)$ & $BN_o/r$ \\ \cline{2-5}
 & \begin{tabular}[c]{@{}c@{}}Transmitter parameterized,\\ separated design\end{tabular} & $N_iN_o+(N_t+N_r)r$ & $BN_o(N_i+N_t+N_r)$ & $BN_o/r$ \\ \cline{2-5}
 & \begin{tabular}[c]{@{}c@{}}Receiver parameterized,\\ combined design\end{tabular} & $N_iN_oN_r/r+N_tr$ & $BN_i(N_t+N_oN_r/r)$ & $BN_i/r$ \\ \cline{2-5}
 & \begin{tabular}[c]{@{}c@{}}Receiver parameterized,\\ separated design\end{tabular} & $N_iN_o+(N_t+N_r)r$ & $BN_i(N_o+N_t+N_r)$ & $BN_i/r$ \\ \cline{1-5}
\multirow{4}{*}{Conv} & \begin{tabular}[c]{@{}c@{}}Transmitter parameterized,\\ combined design\end{tabular} & $N_{co}N_k^2N_t/r+N_rr$ & \begin{tabular}[c]{@{}c@{}}$BN_{ci}N_{co}N_{wo}N_{ho}N_k^2N_t/r$\\ $+BN_{co}N_{wo}N_{ho}N_r$\end{tabular} & $BN_{co}N_{wo}N_{ho}/r$ \\ \cline{2-5}
 & \begin{tabular}[c]{@{}c@{}}Transmitter parameterized,\\ separated design\end{tabular} & $N_{co}N_k^2+(N_t+N_r)r$ & \begin{tabular}[c]{@{}c@{}}$BN_{ci}N_{co}N_{wo}N_{ho}N_k^2$\\ $+BN_{co}N_{wo}N_{ho}(N_t+N_r)$\end{tabular} & $BN_{co}N_{wo}N_{ho}/r$ \\ \cline{2-5}
 & \begin{tabular}[c]{@{}c@{}}Receiver parameterized,\\ combined design\end{tabular} & $N_{co}N_k^2N_r/r+N_tr$ & \begin{tabular}[c]{@{}c@{}}$BN_{ci}N_{co}N_{wo}N_{ho}N_k^2N_r/r$\\ $+BN_{ci}N_{wi}N_{hi}N_t$\end{tabular} & $BN_{ci}N_{wi}N_{hi}/r$ \\ \cline{2-5}
 & \begin{tabular}[c]{@{}c@{}}Receiver parameterized,\\ separated design\end{tabular} & $N_{co}N_k^2+(N_t+N_r)r$ & \begin{tabular}[c]{@{}c@{}}$BN_{ci}N_{co}N_{wo}N_{ho}N_k^2$\\ $+BN_{ci}N_{wi}N_{hi}(N_t+N_r)$\end{tabular} & $BN_{ci}N_{wi}N_{hi}/r$ \\ \cline{1-5}
\end{tabular}
\end{table*}

We note that $r$ is always no more than $\min\{N_r, N_t\}$, even probably being far smaller than $\min\{N_r, N_t\}$ under sparse channels.
From this fact and Table \ref{table::1}, we know that for most cases, although the separated design requires one more time of matrix multiplication, it usually uses fewer parameters and costs less in computation because of smaller matrices. Moreover,  whether the transmitter parameterized designs or the receiver parameterized designs are better depends on the relationship between the input and output sizes. If the input size is larger than the output size, the receiver parameterized designs are better, and vice versa.
\section{Numerical Results}\label{sec::5}
In this section, we evaluate the performance of the proposed system and provide numerical results based on the CIFAR-10 image classification dataset.
We use the full CIFAR 10 dataset with 50000 training images and 10000 testing images in 10 categories. All the results are reported in terms of testing accuracy.
\subsection{Experimental Settings}
\subsubsection{Communication System Settings}
We adopt common MIMO communication settings, where the channel is composed of $N_p$ different paths, as for the gain, transmission direction angle, and arrival direction angle of each path are independently and randomly generated by uniform distribution, i.e.,
\begin{equation}\label{channel}
\mathbf{H} = \sum_{n=1}^{N_p} a_n (1,\cdots,e^{j(N_r-1)\theta_n})^H(1,\cdots,e^{j(N_t-1)\phi_n}),
\end{equation}
where $\theta_n\sim U(-\pi,\pi)$, $\phi_n\sim U(-\pi,\pi)$, $|a_n|\sim U(0.5, 1.5)$, and $\textrm{angle}(a_n)\sim U(-\pi,\pi)$ for each $n$.

The total transmitting power of each transmission is normalized to 1, the noise power of each receiving antenna is the same, and the SNR is defined by
\begin{equation}
\textrm{SNR}=\|\mathbf{H}\|_2 N_r/P_n,
\end{equation}
where $P_n$ is the total noise power.

We consider the following settings in this section.
Firstly, we consider a 2-node complex channel setting, where $N_t=N_r=16$ and $N_p=20$, such that the channel is with full rank.
We also consider a 3-node sparse channel setting, where $N_t=N_r=16$ and $N_p=4$ for both channels.
Considering the massive MIMO settings, in reality, we consider a massive MIMO setting where there are 3 nodes, and for both channels, $N_t=N_r=64$ and $N_p=8$.
Finally, we consider a 3-node moving setting where $N_t=N_r=16$ and $N_p=4$ for both channels. To ensure fairness, the channels of all settings are pre-generated and remain the same in all experiments.
\subsubsection{NN Settings}
\begin{figure*}
    \centering
       \includegraphics[width=0.75\linewidth]{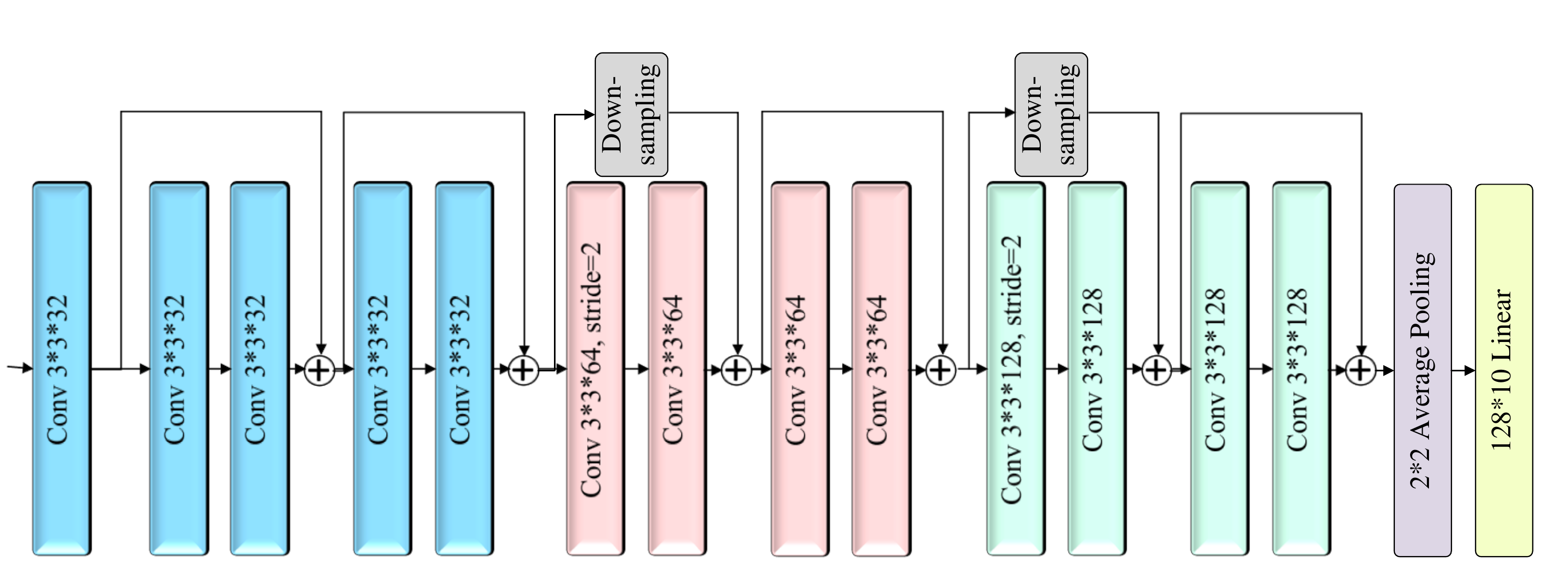}
       \includegraphics[width=0.75\linewidth]{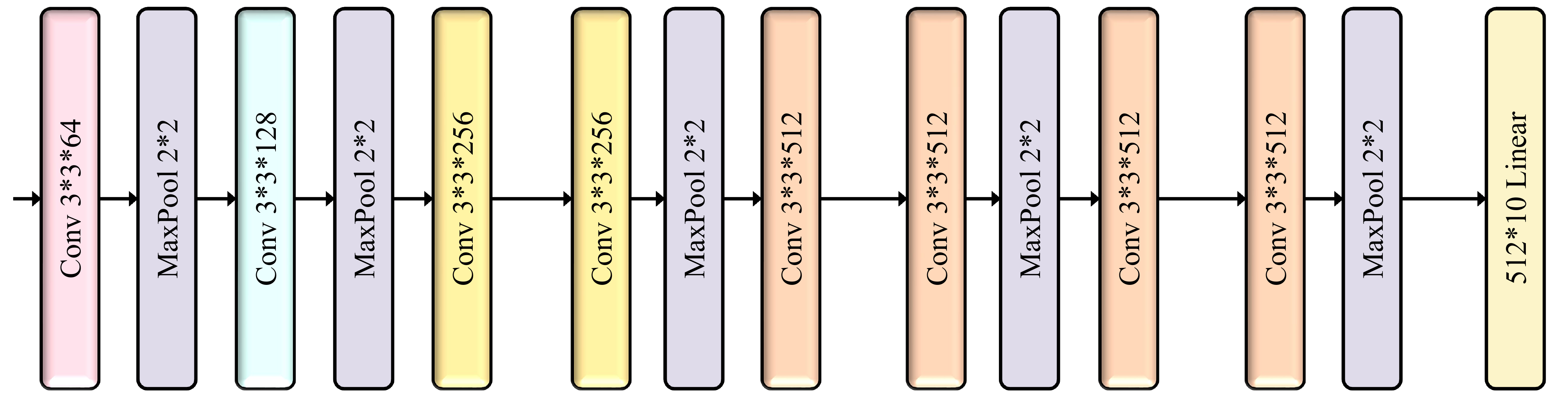}
	  \caption{The structures of the deployed complex ResNet (top) and VGGNet (bottom).}
	  \label{fig::NN}
\end{figure*}
We use the 16-layer complex ResNet to evaluate the performance of our proposed system and algorithms in this section. The NN structure follows the one in \cite{complexResNet} and is shown in Fig. \ref{fig::NN}. ``Conv$3*3*N_c$, stride=2'' refers to a convolutional layer with $N_c$ kernels with size $3*3$ and stride 2, and stride is 1 if not written. ``Downsampling'' refers to cast downsampling such that the outputs of the shortcut connection and the convolutional layers have the same size, which is realized by the convolutional layer with $1*1$ kernels. ``Average pooling'' and ``Linear'' stand for the corresponding layer with the following parameters. Each layer, except the pooling layer, is followed by a BN layer.
For 2-node settings, the NN is split at the 9th layer, and for 3-node settings, the NN is split at the 5th and 13th layers.
We also consider a VGGNet structure as also shown in Fig. \ref{fig::NN}. It is another famous convolutional neural network structure that is wider but shallower than the ResNet.
For 2-node settings, the VGGNet is split at the 5th layer, and for 3-node settings, the VGGNet is split at the 3rd and 8th layers.
We use batch size 64 and Adam optimizer \cite{adam} with the learning rate of 0.005 to train the NN, and set $\alpha=0.99$.
Since the output sizes of the convolutional layers are no more than the corresponding input sizes, we use the receiver parameterized designs. According to Table \ref{table::1}, we choose the separated design for $r<16$ and the combined design for $r=16$ for the minimum computation and storage cost.

\subsection{Comparison Algorithms}
We verify the proposed system and algorithm by comparing them with the straightforward algorithms below. The first is the centralized algorithm, where the NN is trained in a centralized manner. We also use traditional split ML and MIMO-based split ML, where the communication of split ML is conducted by traditional MIMO. We note that both of them are identical to the centralized algorithm in performance since, in such systems, communication is just a tube.
The last comparison algorithm is called the ideal case, where the channel is assumed to be known exactly. Hence, from \eqref{SVD}-\eqref{loss2}, we can use the following precoding and combining matrix:
\begin{eqnarray}
\mathbf{P}^{H}=\mathbf{V}_{:, :N_r-r},\\
\mathbf{C}^{H}=\mathbf{U}_{:, :N_r-r},
\end{eqnarray}
where the matrices are calculated by \eqref{SVD}. When training NNs, the matrices $\mathbf{P}$ and $\mathbf{C}$ is determined by the channel and is not trainable in the whole progress.

The costs of all considered algorithms are compared in Table \ref{table::2}, where the smallest value in each column is marked in bold font. We assume that 16 QAM modulation is used in digital systems. Hence a 64-bit complex float number is represented by 16 digital symbols or an analog symbol. Moreover, digital communication systems require channel coding and retransmissions according to channel quality.
Thus, analog communication requires no more than 1/16 times of transmission compared with digital communication.
In Table \ref{table::2}, $C_\textrm{NN}$, $C_\textrm{Trans}$, and $C_\textrm{CE}$ are defined as the total cost of calculating the NN (in terms of MACs), transmission (in terms of communication rounds), and channel estimation (in terms of times) in traditional split ML, respectively. $C_\textrm{MIMO}$ represents the total native computation cost by the MIMO system (precoding and combining, in terms of MACs) in traditional ML. According to the calculation in Table \ref{table::1}, we always have $C_\textrm{MIMO}<<C_\textrm{NN}$.
We emphasize that our MIMO-based OAC design greatly reduces the transmission cost compared to traditional split ML without OAC.
It also saves the system costs and/or improves its overall efficiency by eliminating explicit channel estimation when compared with the ideal case.
According to Table \ref{table::2}, the proposed algorithm yields the lowest cost, which reflects the high communication and transmission efficiency of the considered MIMO-based OAC structure.
\begin{table}[]\centering
\caption{Cost comparisons among different algorithms.}
\label{table::2}
\begin{tabular}{|p{0.1\textwidth}<{\centering}|c|p{0.1\textwidth}<{\centering}|p{0.1\textwidth}<{\centering}|}
\hline
 & Computation & Transmission & Channel Estimation \\ \hline
Traditional Split ML & \multirow{2}{*}{$\bm{C}_\textrm{NN}$} & \multirow{2}{*}{$C_\textrm{Trans}$} & \multirow{2}{*}{$C_\textrm{CE}$} \\ \hline
MIMO-Based Split ML & \multirow{2}{*}{$C_\textrm{NN}+C_\textrm{MIMO}$} & \multirow{2}{*}{$C_\textrm{Trans}/r$} & \multirow{2}{*}{$C_\textrm{CE}/r$} \\ \hline
\multirow{2}{*}{Ideal Case} & \multirow{2}{*}{$C_\textrm{NN}+C_\textrm{MIMO}$} & \bf{at most} $\bm{C}_\textrm{Trans}/(\bm{16r})$ & at most $C_\textrm{CE}/(16r)$ \\ \hline
\multirow{2}{*}{Proposed} & \multirow{2}{*}{$C_\textrm{NN}+C_\textrm{MIMO}$} & \bf{at most} $\bm{C}_\textrm{Trans}/(\bm{16r})$ & \multirow{2}{*}{\bf{0}} \\ \hline
\end{tabular}
\end{table}
\subsection{Results Under Stable Complex Channel}\label{sec::complex}
In this section, we consider the 2-node complex channel setting. In the setting, the channel is almost of full rank since there are 20 independent randomly generated paths. We suppose the pre-estimation of the rank $r$ to be 4, 8, 12, or 16 to verify the performance of the algorithms.

Fig. \ref{fig::complex_t} shows the convergence trend of various algorithms of the ResNet. In this figure, we consider a high-quality channel, where SNR = 35 dB and $r=16$. In Fig. \ref{fig::complex_t}, both the ideal case and the proposed algorithm do not show a significant difference in terms of convergence rate, but with a slight performance loss, which verifies our analysis that the proposed system implementation is equivalent to the centralized ML system if there is almost no noise. From the centralized case, we can also find both structures have similar learning abilities.

In Figs. \ref{fig::complex} to \ref{fig::complex_SNR_alg}, we present the effect of SNR with different pre-estimation $r$ and the ResNet.
We can find that the trends of all the curves are similar that the accuracy increases as SNR increases and obtain almost centralized performance when the channel is of high quality, which is intuitive.

For the same method with different $r$s, although the pre-estimation $r=16$ is the most accurate, smaller $r$ lead to better performance. This is because when using smaller $r$, smaller data size is transmitted per transmission, and fewer subchannels are used, both equivalent to higher SNR.

We note that when $r=16$, all possible subchannels are used, and hence the optimization for communication does not valid. For other values of $r$, we can find that the proposed algorithm performs better with low SNR, whereas the ideal case performs better with high SNR.
This is mainly because when the channel is well, the best strategy is to simply transmit all the outputs to the receiver. Hence the ideal case performs better.
However, when the channel becomes poor, the best strategy becomes to stress transmitting only a part of the outputs to obtain higher quality. This strategy can make sense mainly because the output of NNs is usually sparse \cite{sparse}, and the performance loss caused by the reduction of width is smaller than that by the increase of the noise on the intermediate output.

In Fig. \ref{fig::networks}, we show the results of the proposed algorithm for both networks with $r=16$.
The results show that the performance of VGGNet is better than that of ResNet, especially that its performance drops later as the channel becomes worse.
That is probably because the VGGNet is wider, requiring more communication load in each transmission, which is usually detrimental for split ML.
However, in the proposed scheme, wider NNs can retain more information in the intermediate results, which is somehow similar to retransmission.
From these results, the neural network structure suitable for the proposed split SL system needs further discussion.

\begin{figure}[!htb]
\centering
\includegraphics[width=0.455\textwidth]{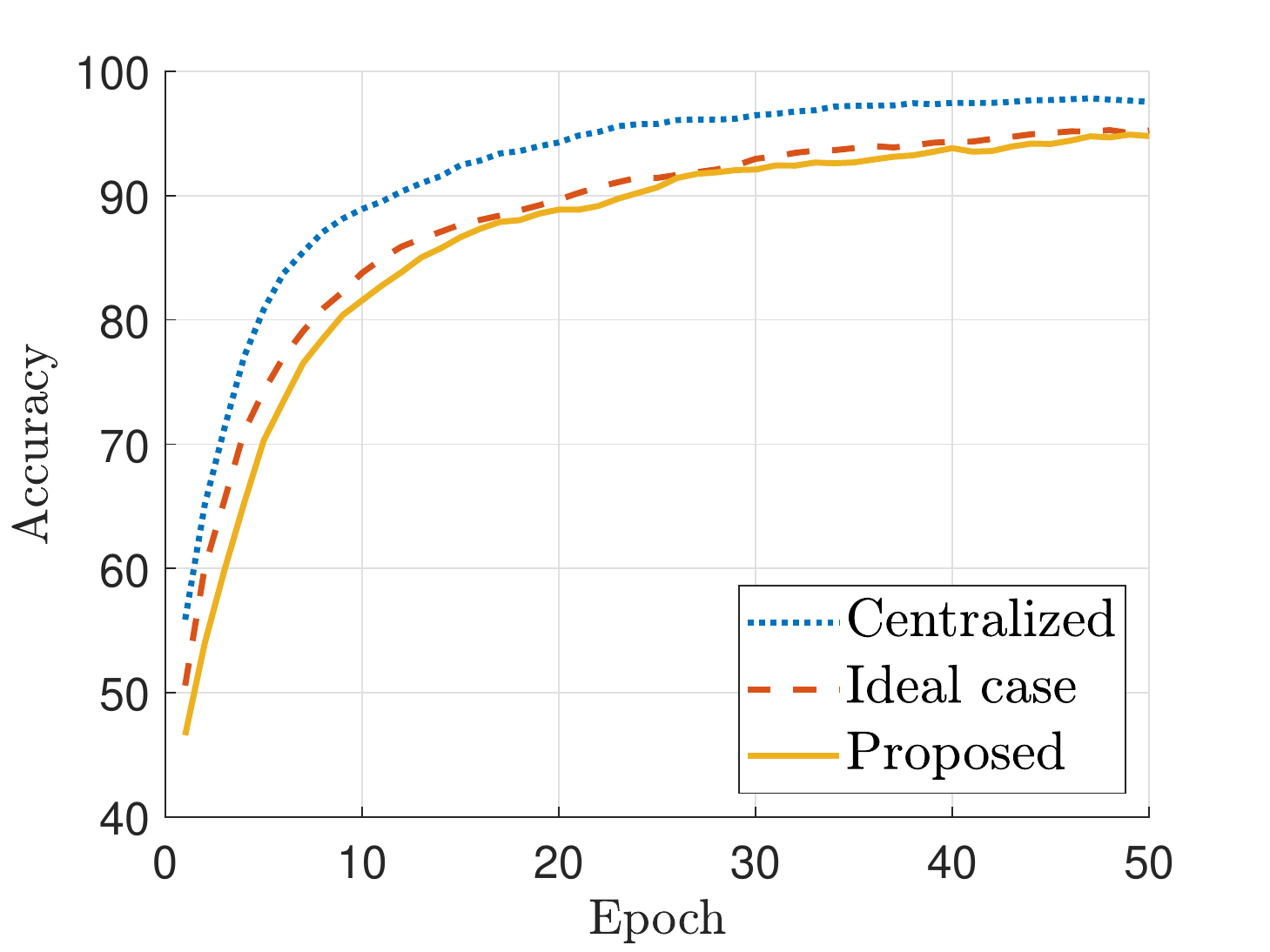}
\caption{The convergence curve of different algorithms.}\label{fig::complex_t}
\centering
\includegraphics[width=0.455\textwidth]{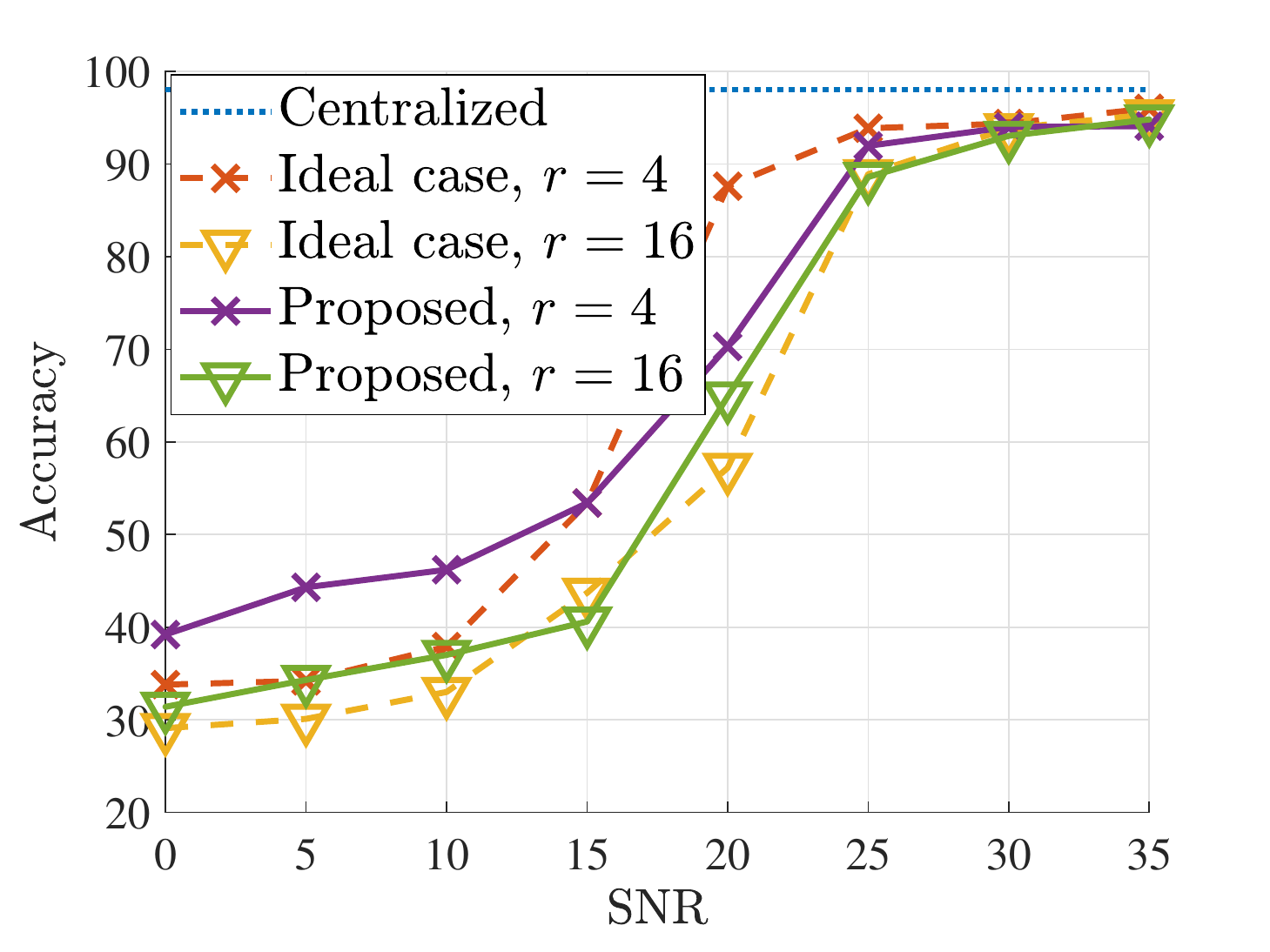}
\caption{The performance of different algorithms with different $r$s under the stable complex channel.}\label{fig::complex}
\end{figure}
\begin{figure}[!htb]
\centering
\includegraphics[width=0.455\textwidth]{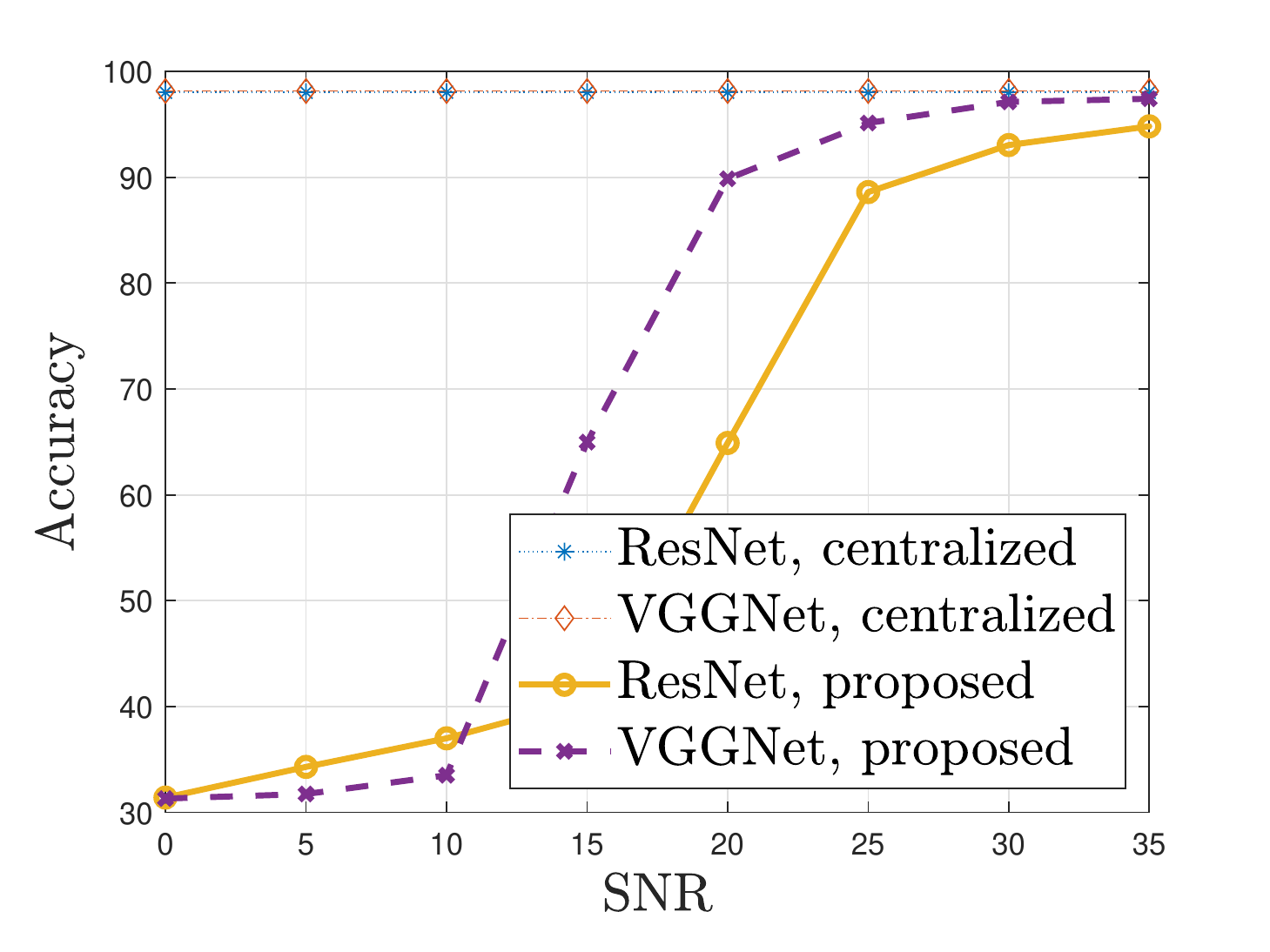}
	  \caption{The performance of different NNs with different SNRs.}\label{fig::networks}
\end{figure}

\begin{figure}[!htb]
\centering
\includegraphics[width=0.455\textwidth]{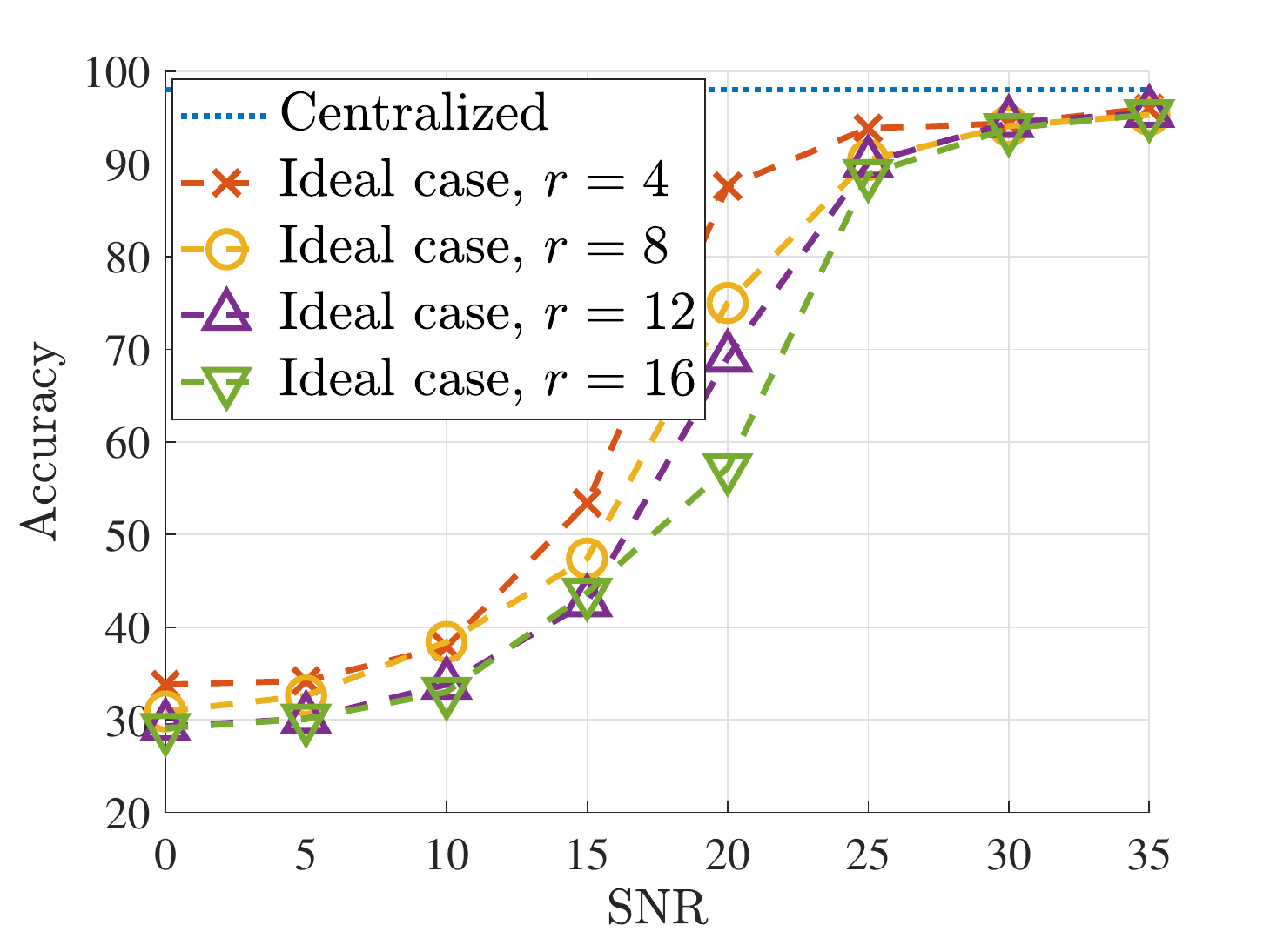}
\caption{The performance of the ideal case with different $r$s under the stable complex channel.}\label{fig::complex_SNR_baseline}
\centering
\includegraphics[width=0.455\textwidth]{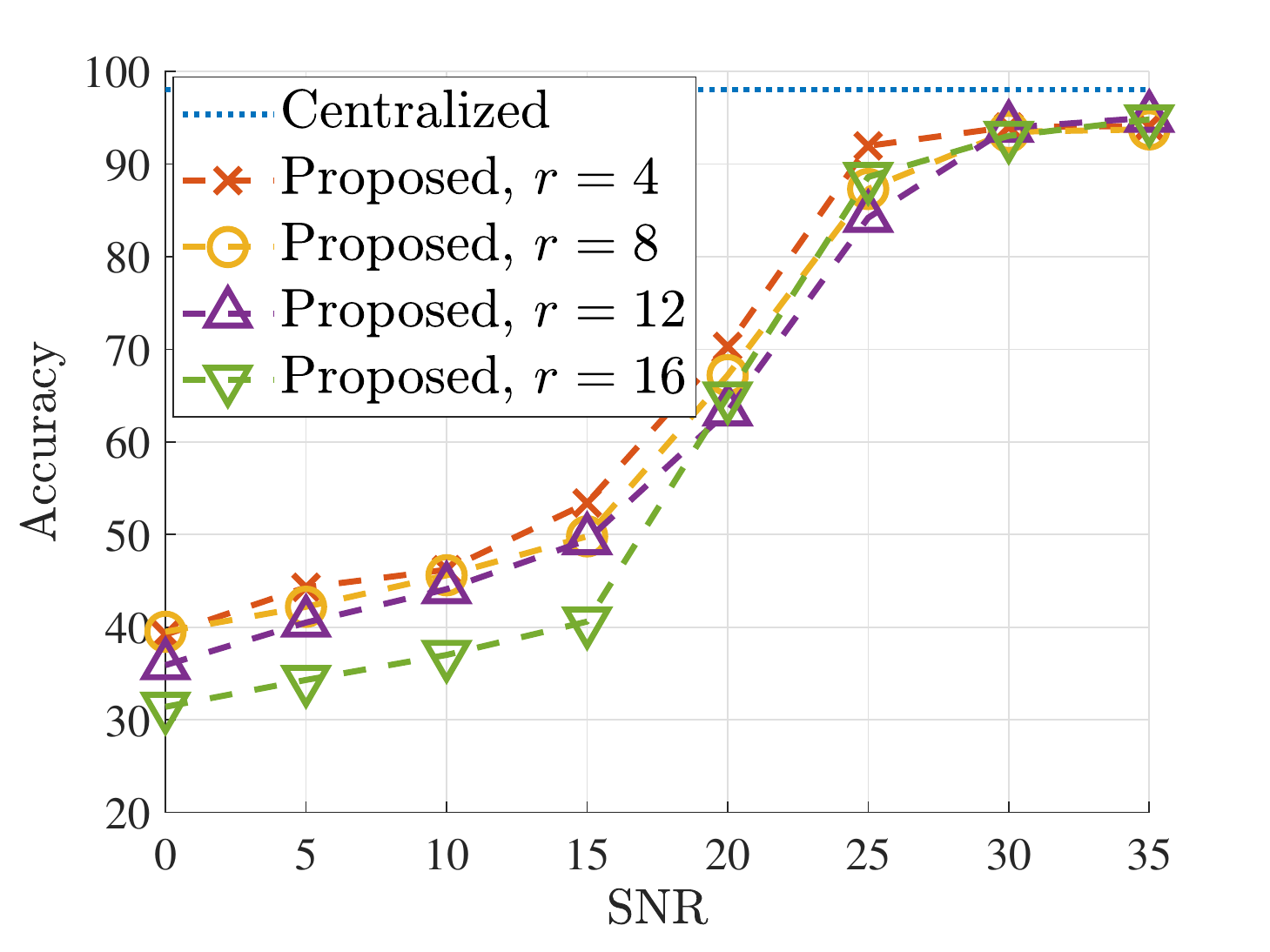}
\caption{The performance of the proposed algorithm with different $r$s under the stable complex channel.}\label{fig::complex_SNR_alg}
\end{figure}

\subsection{Results Under Stable Sparse Channels}\label{sec::sparse}
In this section, we consider the 3-node sparse channel setting with ResNet. In the setting, the channel is of rank 4. We suppose the pre-estimation of the rank $r$ to be 4 or 8 to verify the performance of the algorithms.
In Fig. \ref{fig::sparse}, we show the performance of different algorithms and different pre-estimations of $r$.
The results and conclusions are similar to those of complex channels. However, it is counterintuitive that the performance when $r=8$ does not decline sharply, although it does not satisfy the conditions of Theorem \ref{the::2}. This phenomenon is mainly caused by the sparsity of NNs \cite{sparse}. All the preconditions are derived by the suppose that the parameters $\mathbf{W}$ can take arbitrary values in $\mathcal{C}^{N_i \times N_o}$, whereas the optimal $\mathbf{W}$ is usually sparse in reality, which may explain the small performance loss.

\begin{figure}[!htb]
\centering
\includegraphics[width=0.455\textwidth]{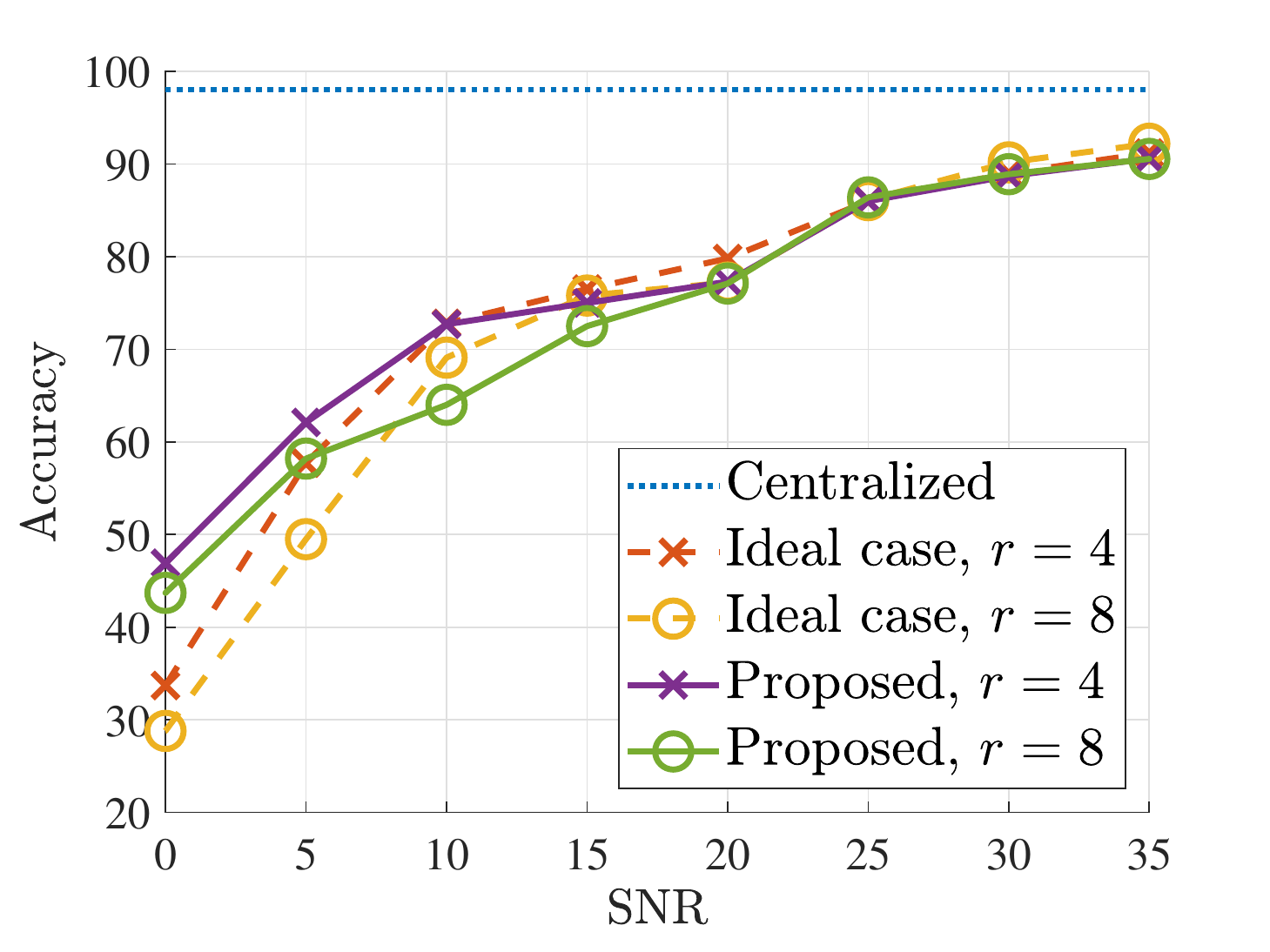}
	  \caption{The performance of different algorithms with different $r$s under stable sparse channels.}\label{fig::sparse}
\centering
\includegraphics[width=0.455\textwidth]{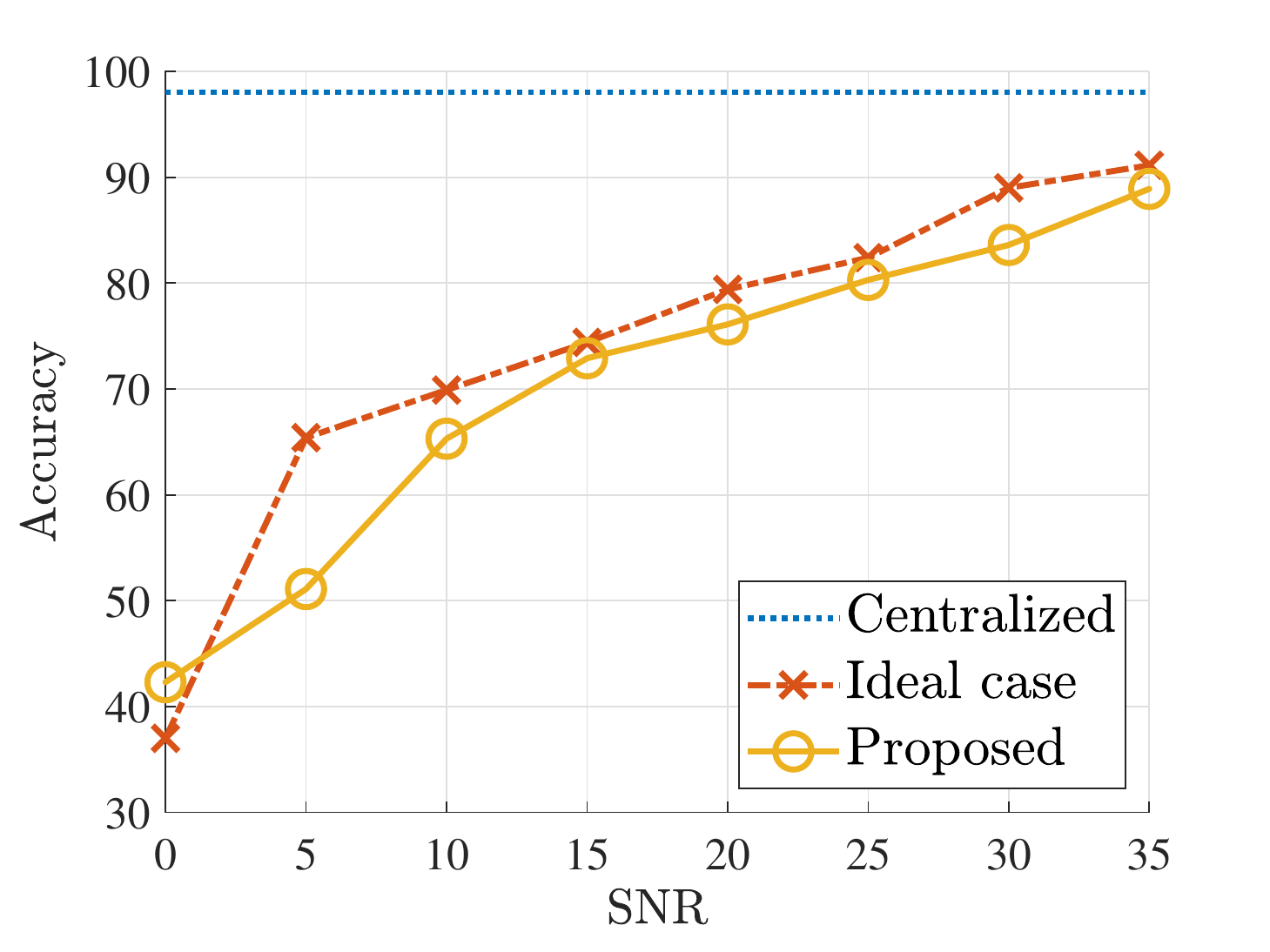}
	  \caption{The performance of different algorithms under stable massive MIMO channels.}\label{fig::massive}
\end{figure}
\subsection{Results Under Stable Massive MIMO Channels}\label{sec::massive}
We show the results under stable massive MIMO channels with $r=8$ and ResNet in Fig. \ref{fig::massive}, which does not show apparent discrimination compared to Fig. \ref{fig::sparse}, showing the effectiveness of our proposed system in massive MIMO scenarios. However, in Fig. \ref{fig::massive}, the proposed algorithm cannot outperform the ideal case, which is mainly caused by that the proposed algorithm brings much error when the dimension becomes larger.

\begin{figure}
    \centering
       \includegraphics[width=0.45\textwidth]{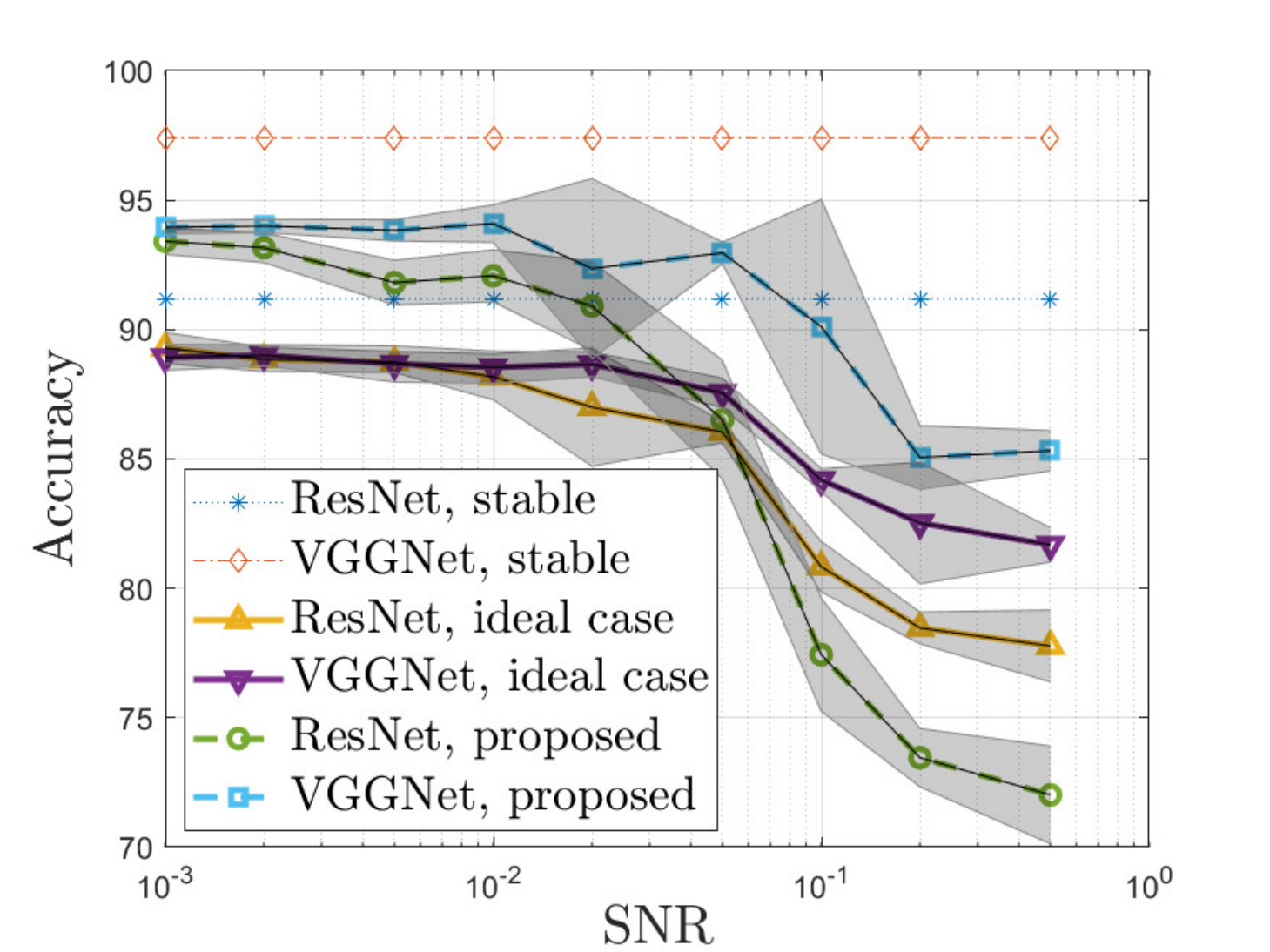}
	  \caption{The performance of different algorithms under moving MIMO channels, the results are averaged among 5 independent runs.} \label{fig::moving}
\end{figure}
\subsection{Results Under Time-Varying Channels}\label{sec::moving}
We show the results under dynamic channel settings here. In this setting, both the channel matrices vary according to the following memory model:
\begin{eqnarray}
a_{n,t+1}=(1-\rho)a_{n,t}+\rho \hat{a}_n,\\
\theta_{n,t+1}=(1-\rho)\theta_{n,t}+\rho\hat{\theta}_n,\\
\phi_{n,t+1}=(1-\rho)\phi_{n,t}+\rho\hat{\phi}_n,
\end{eqnarray}
where $\mathbf{a}_n$, $\theta_n$, and $\phi_n$ are the value of the corresponding values in \eqref{channel} at $t$-th time slot, respectively. $\hat{a}_n$, $\hat{\theta}_n$, and $\hat{\phi}_n$ are generated randomly following the way in \eqref{channel}, and $\rho$ indicates the changing rate. The channel $\mathbf{H}_t$ at each time slot is also generated by \eqref{channel}.
{In Fig. \ref{fig::moving}, we show the average values and standard deviations of the results of 5 independent runs. The standard deviations are shown by the shadows.}
We can find that both the ideal case and the proposed algorithm perform well even under moving channels.
In the ideal case, both the precoding and the combining matrices suddenly change when the channel changes, while the proposed algorithm can learn to adapt to the change of channel. Hence, the proposed algorithm outperforms the ideal case with smooth channel change, while performing poorly when the channel changes fiercely.
It is mainly because the channel is considered implicit parameters in the NN.
The training method is an iterative gradient algorithm, which cannot adapt the implicit parameters' change very well.
Moreover, the performance of VGGNet is still better since it is wider as we have explained above.

\section{Conclusions and Future Directions}\label{sec::6}
In this paper, we proposed a MIMO-based OAC system for implementing split ML over wireless networks, which significantly improves communication efficiency.
In the proposed system, the precoder and combiner design, together with the implicit MIMO channel matrix, contribute to a trainable layer, which can be natively integrated into a NN.
We theoretically found the inherit equivalent procedure between the reciprocity of MIMO channels and the training process of NNs.
Based on this finding, the explicit channel estimation process is eliminated in our system, which can improve the system's efficiency.
Besides, we provided basic principles for implementing the proposed system such that the proposed structure can be mathematically equivalent to any fully connected or convolutional layer in NNs.
Numerical results show that the proposed system has a similar converge rate with slight performance degradation under high-quality channels compared to the centralized algorithm.
We also observed that our algorithm could work well under various channels and is robust to slowly varying channels.
However, since iterative algorithms cannot adapt to fast-changing implicit parameters, the proposed system can only be applied to systems with quasi-stable channels.

The numerical results show that the NN structure may affect the performance of the proposed system, requiring a specific design of the NN structure such as that in \cite{AirNet}.
In this paper, we considered channel  reciprocity and the extension to natural channels without strict reciprocity remains to be a future direction.
The proposed MIMO OAC-based split ML structure can play an essential role in native distributed learning problems for wireless communications, such as distributed sensing, channel estimation, and prediction, location estimation and prediction, distributed network optimization, interference coordination, etc.

\begin{appendices}
\section{Proof of Theorem \ref{the::3}}\label{proof_the::3}
\begin{proof}
To prove Theorem \ref{the::3}, we introduce the regret to analyze the performance of the learning algorithm \cite{Zinkevich03}.
In particular, the regret is defined as
\begin{equation}
R(T)=\sum_{t=1}^{T} f_{t}\left(\bm{\theta}^{(t)}\right)-\sum_{t=1}^{T} f_{t}(\bm{\theta}^*),
\end{equation}
where $\bm{\theta}^*$ is the optimal parameter set.
Since function $f_t(\bm{\theta})$ is convex, we have
\begin{equation}
f_t(\bm{\theta}^{(t)})-f_t({\bm{\theta}^*})\leq\langle\bm{g}_t,\bm{\theta}^{(t)}-\bm{\theta}^*\rangle,
\end{equation}
which shows that
\begin{equation}
R(T)\leq\sum_{t=1}^T \langle\bm{g}_t,\bm{\theta}^{(t)}-\bm{\theta}^*\rangle.\label{regret}
\end{equation}

According to the update procedure of SGD with noise, we have
\begin{equation}
\bm{\theta}^{(t+1)}=\bm{\theta}^{(t)}-\eta_t(\bm{g}_t+\bm{\Lambda}_t \bm{n}_t),
\end{equation}
where $\bm{\Lambda}_t$ is a non-negative diagnose matrix, each element of which is no more than $\xi$, and $\bm{n}_t\sim\mathcal{N}(\bm{0}^{N*1},\sigma\bm{I}^{N*N})$ is the noise.
Hence, we have
\begin{equation}
\begin{aligned}
&\|\bm{\theta}^{(t+1)}-\bm{\theta}^*\|_2^2\\
=& \|\bm{\theta}^{(t)}-\bm{\theta}^*-\eta_t(\bm{g}_t+\bm{\Lambda_t} \bm{n}_t)\|_2^2\\
=& \|\bm{\theta}^{(t)}-\bm{\theta}^*\|_2^2-2\eta_t\langle\bm{g}_t, \bm{\theta}^{(t)}-\bm{\theta}^*\rangle+\eta_t^2\|\bm{g}_t\|_2^2\\
& +\eta_t^2|\bm{\Lambda}_t|^2\|\bm{n}_t\|_2^2-2\eta_t\langle\bm{\Lambda}_t \bm{n}_t, \bm{\theta}^{(t)}-\bm{\theta}^*\rangle+2\eta_t\langle\bm{\Lambda_t} \bm{n}_t,\bm{g}_t\rangle.
\end{aligned}\label{eq39}
\end{equation}
According to \eqref{eq39}, we can obtain
\begin{equation}
\begin{aligned}
\langle\bm{g}_t, \bm{\theta}^{(t)}-\bm{\theta}^*\rangle=&\frac{1}{2\eta_t}(\|\bm{\theta}^{(t)}-\bm{\theta}^*\|_2^2-\|\bm{\theta}^{(t+1)}-\bm{\theta}^*\|_2^2)\\
&+\frac{\eta_t}{2}|\bm{\Lambda}_t|^2\|\bm{n}_t\|_2^2+\frac{\eta_t}{2}\|\bm{g}_t\|_2^2\\
&-\langle\bm{\Lambda}_t \bm{n}_t, \bm{\theta}^{(t)}-\bm{\theta}^*\rangle+\eta_t\langle\bm{\Lambda_t\bm{n}_t,\bm{g}_t\rangle}\\
\leq & \frac{1}{2\eta_t}(\|\bm{\theta}^{(t)}-\bm{\theta}^*\|_2^2-\|\bm{\theta}^{(t+1)}-\bm{\theta}^*\|_2^2)\\
 &+\frac{\eta_t\xi}{2}\|\bm{n}_t\|_2^2+\frac{\eta_t G^2}{2} - \langle\bm{\Lambda}_t \bm{n}_t, \bm{\theta}^{(t+1)}-\bm{\theta}^*\rangle,\label{proof-1}
\end{aligned}
\end{equation}
where the last inequality comes from the fact that $\|\bm{g}_t\|_2^2\leq G$. Since there are always many parameters in a neural network, i.e., $N$ is very large, we can approximately get $\|\bm{n}_t\|_2^2\approx N\sigma^2$ based on the law of large numbers.
Substituting \eqref{proof-1} to \eqref{regret}, we have
\begin{equation}
\begin{aligned}
R(T)\leq & \sum_{t=1}^T\frac{1}{2\eta_t}(\|\bm{\theta}^{(t)}-\bm{\theta}^*\|_2^2-\|\bm{\theta}^{(t+1)}-\bm{\theta}^*\|_2^2)\\
& + \frac{\xi\|\bm{n}_t\|_2^2}{2}\sum_{t=1}^T \eta_t - \sum_{t=1}^T \langle\bm{\Lambda}_t \bm{n}_t, \bm{\theta}^{(t+1)}-\bm{\theta}^*\rangle.\label{regret2}
\end{aligned}
\end{equation}

Due to the fact that the parameters are bounded and learning rate is decreasing over time, we have
\begin{equation}
\begin{aligned}
&\sum_{t=1}^T\frac{1}{2\eta_t}(\|\bm{\theta}^{(t)}-\bm{\theta}^*\|_2^2-\|\bm{\theta}^{(t+1)}-\bm{\theta}^*\|_2^2)\\
=&  \frac{1}{2\eta_1}\|\bm{\theta}^{(1)}-\bm{\theta}^*\|_2^2 - \frac{1}{2\eta_T}\|\bm{\theta}^{(T+1)}-\bm{\theta}^*\|_2^2\\
&+\sum_{t=2}^T(\frac{1}{2\eta_t}-\frac{1}{2\eta_{t-1}})\|\bm{\theta}^{(1)}-\bm{\theta}^*\|_2^2\\
 \leq& \frac{1}{2\eta_1} D^2 + \sum_{t=2}^T(\frac{1}{2\eta_t}-\frac{1}{2\eta_{t-1}}) D^2\\
=&  \frac{D^2}{2\eta_T}.
\end{aligned}
\end{equation}
Since $\bm{n}_t$ follows a standard Gaussian distribution, we can obtain expectation $\mathbb{E}\{1/T\sum_{t=1}^T$ $ \langle\bm{\Lambda}_t \bm{n}_t, \bm{\theta}^{(t)}-\bm{\theta}^*\rangle\}=0$ and standard deviation $\sigma\{1/T\sum_{t=1}^T \langle\bm{\Lambda}_t \bm{n}_t, \bm{\theta}^{(t)}-\bm{\theta}^*\rangle\}=\mathcal{O}(\sigma  T^{-1/2})$.

When we choose $\eta_t=C/t^{1/2}$, we have
\begin{equation}
\begin{aligned}
R(T)/T\leq &\frac{D^2}{2C T^{1/2}}+\frac{C(\xi\|\bm{n}_t\|_2^2 + G^2)}{2T}\sum_{t=1}^T t^{-1/2}\\
&-\frac{1}{T}\sum_{t=1}^T \langle\bm{\Lambda}_t \bm{n}_t, \bm{\theta}^{(t)}-\bm{\theta}^*\rangle.\label{regret3}
\end{aligned}
\end{equation}
In the right hand of \eqref{regret3}, the first item is of $\mathcal{O}(T^{-1/2})$, the second item is approximately by $\mathcal{O}(\sigma^2 T^{-1/2})$, and the third item is a random item with zero expectation and standard deviation of $\mathcal{O}(\sigma T^{-1/2})$. Hence, the convergence rate of the algorithm is $\mathcal{O}(\sigma^2 T^{-1/2})$.
Comparing \eqref{regret3} with the regret of SGD provided in \cite{Zinkevich03} that
\begin{equation}
R(T)/T\leq \frac{D^2}{2C T^{1/2}}+\frac{C G^2}{T}\sum_{t=1}^T t^{-1/2},
\end{equation}
we can easily observe that noise power has a linear effect on the convergence rate of SGD with noise.
Theorem \ref{the::3} is proved.
\end{proof}
\end{appendices}
\printbibliography
\end{document}